\documentclass[letterpaper]{article} 

\pdfoutput=1

\usepackage{aaai24}  
\usepackage{times}  
\usepackage{helvet}  
\usepackage{courier}  
\usepackage[hyphens]{url}  
\usepackage{graphicx} 
\urlstyle{rm} 
\usepackage{natbib}  
\usepackage{caption} 
\frenchspacing  
\setlength{\pdfpagewidth}{8.5in} 
\setlength{\pdfpageheight}{11in} 
%

\usepackage{algorithm}
\usepackage{algorithmic}


%
\usepackage{newfloat}
\usepackage{listings}
\DeclareCaptionStyle{ruled}{labelfont=normalfont,labelsep=colon,strut=off} 
\lstset{%
	basicstyle={\footnotesize\ttfamily},
	numbers=left,numberstyle=\footnotesize,xleftmargin=2em,
	aboveskip=0pt,belowskip=0pt,%
	showstringspaces=false,tabsize=2,breaklines=true}
\floatstyle{ruled}
\newfloat{listing}{tb}{lst}{}
\floatname{listing}{Listing}
%
\pdfinfo{
/TemplateVersion (2024.1)
}

\setcounter{secnumdepth}{2} 

\usepackage{tikz}
\usepackage{amssymb}
\usepackage{xcolor}
\usepackage{amsmath}
\usepackage{cleveref}
\usepackage{amsthm}
\usepackage{stmaryrd}
\usepackage{relsize}
\usepackage{booktabs}
\usepackage{subcaption}
\usepackage{thm-restate}

\newtheorem{definition}{Definition}
\newtheorem{example}{Example}
\newtheorem{lemma}{Lemma}
\newtheorem{proposition}{Proposition}

\DeclareMathOperator{\ltlN}{\normalfont\textsf{X}}
\DeclareMathOperator{\ltlG}{\normalfont\textsf{G}}
\DeclareMathOperator{\ltlF}{\normalfont\textsf{F}}
\DeclareMathOperator{\ltlU}{\normalfont\textsf{U}}

\makeatletter
\DeclareFontFamily{OMX}{MnSymbolE}{}
\DeclareSymbolFont{MnLargeSymbols}{OMX}{MnSymbolE}{m}{n}
\SetSymbolFont{MnLargeSymbols}{bold}{OMX}{MnSymbolE}{b}{n}
\DeclareFontShape{OMX}{MnSymbolE}{m}{n}{
	<-6>  MnSymbolE5
	<6-7>  MnSymbolE6
	<7-8>  MnSymbolE7
	<8-9>  MnSymbolE8
	<9-10> MnSymbolE9
	<10-12> MnSymbolE10
	<12->   MnSymbolE12
}{}
\DeclareFontShape{OMX}{MnSymbolE}{b}{n}{
	<-6>  MnSymbolE-Bold5
	<6-7>  MnSymbolE-Bold6
	<7-8>  MnSymbolE-Bold7
	<8-9>  MnSymbolE-Bold8
	<9-10> MnSymbolE-Bold9
	<10-12> MnSymbolE-Bold10
	<12->   MnSymbolE-Bold12
}{}

\let\llangle\@undefined
\let\rrangle\@undefined
\DeclareMathDelimiter{\llangle}{\mathopen}%
{MnLargeSymbols}{'164}{MnLargeSymbols}{'164}
\DeclareMathDelimiter{\rrangle}{\mathclose}%
{MnLargeSymbols}{'171}{MnLargeSymbols}{'171}
\makeatother

\definecolor{dkgreen}{rgb}{0,0.3,0}
\definecolor{dkblue}{rgb}{0,0.1,0.5}

\colorlet{comment-color}{black!50}

\lstdefinelanguage{custom-lang}{
	keywords={let, in, if, then, else, for, to, do, return, from, def},
	keywordstyle=[1]\bfseries,
	morekeywords=[2]{modelCheck},
	keywordstyle=[2]\itshape,
	morekeywords=[3]{product,nestedStateFormulas,LTLtoAPA},
	keywordstyle=[3]\itshape,
	comment=[l][\color{comment-color}]{//},
	literate=%
	{=}{{=}}1
	{//}{{\color{black!50!white}/\!/}}1
	{|}{{{|}}}1
	{:}{{{\textbf{:}}}}1
	{:=}{{{:=}}}1
	{@}{  }1
}

\lstdefinestyle{default}{
	escapeinside={(*}{*)},
	basicstyle=\fontsize{9}{10.8},
	columns=fullflexible,
	commentstyle=\color{black!50!white},
	framexleftmargin=1em,
	framexrightmargin=1ex,
	keepspaces=true,
	keywordstyle=,
	mathescape,
	numbers=left,
	numberblanklines=false,
	numbersep=0.5em,
	numberstyle=\relscale{0.75}\color{gray}\ttfamily,
	showstringspaces=true,
	stepnumber=1,
	xleftmargin=1.2em,
	xrightmargin=1em,
}

\lstnewenvironment{code}[1][]
{\small
	\lstset{
		style=default, 
		language=custom-lang,
		#1
	}
}
{}

\newcommand{\ap}{\mathit{AP}}
\newcommand{\pathVars}{\mathcal{V}}

\newcommand{\ldot}{\mathpunct{.}}

\newcommand{\nat}{\mathbb{N}}
\newcommand{\bool}{\mathbb{B}}

\newcommand{\calG}{\mathcal{G}}
\newcommand{\calA}{\mathcal{A}}

\newcommand{\calL}{\mathcal{L}}

\newcommand{\calO}{\mathcal{O}}

\newcommand{\bA}{{\boldsymbol{a}}}

\newcommand{\bF}{{\boldsymbol{f}}}

\newcommand{\EXPTIME}{\texttt{EXPTIME}}
\newcommand{\ATLS}{ATL$^*$}

\newcommand{\CTLS}{CTL$^*$}
\newcommand{\ATL}{ATL}
\newcommand{\CTL}{CTL}

\newcommand{\HyperATLS}{HyperATL$^*$}
\newcommand{\HyperATLSS}{HyperATL$^*_S$}
\newcommand{\HyperLTL}{HyperLTL}
\newcommand{\HyperCTLS}{HyperCTL$^*$}

\newcommand{\tool}{\texttt{HyMASMC}}
\newcommand{\mcmas}{\texttt{MCMAS}}
\newcommand{\mcmassl}{\texttt{MCMAS-SL[1G]}}

\newcommand{\agent}[1]{#1}

\newcommand{\atlsToHyper}[1]{\llparenthesis #1 \rrparenthesis}

\newcommand{\agents}{\mathit{Agts}}
\newcommand{\moves}{\mathbb{A}}

\newcommand{\play}[0]{\mathit{Play}}
\newcommand{\strats}[1]{\mathit{Str}(#1)}

\newcommand{\lQ}{\langle\hspace{-0.6mm}[}
\newcommand{\rQ}{]\hspace{-0.6mm}\rangle}

\definecolor{myred}{HTML}{B06161}
\definecolor{mygreen}{HTML}{508D69}

\newif\iffullversion
\fullversiontrue

\newcommand{\ifFull}[2]{\iffullversion#1\else#2\fi}

%


\title{On Alternating-Time Temporal Logic, Hyperproperties, and Strategy Sharing}
\author{
    Raven Beutner, Bernd Finkbeiner
}
\affiliations{
	CISPA Helmholtz Center for Information Security, Germany
}

\begin{document}

\maketitle

\begin{abstract}
Alternating-time temporal logic (\ATLS{}) is a well-established framework for formal reasoning about multi-agent systems. 
However, while \ATLS{} can reason about the strategic ability of agents (e.g., some coalition $A$ can ensure that a goal is reached eventually), we cannot \emph{compare} multiple strategic interactions, nor can we require multiple agents to follow the \emph{same} strategy. 
For example, we cannot state that coalition $A$ can reach a goal \emph{sooner} (or \emph{more often}) than some other coalition $A'$. 
In this paper, we propose \HyperATLSS{}, an extension of \ATLS{} in which we can \textbf{(1)} compare the outcome of multiple strategic interactions w.r.t.~a \emph{hyperproperty}, i.e., a property that refers to multiple paths at the same time, and \textbf{(2)} enforce that some agents \emph{share} the same strategy.
We show that \HyperATLSS{} is a rich specification language that captures important AI-related properties that were out of reach of existing logics. 
We prove that model checking of \HyperATLSS{} on concurrent game structures is decidable.
We implement our model-checking algorithm in a tool we call \tool{} and evaluate it on a range of benchmarks.
\end{abstract}

\section{Introduction}\label{sec:intro}

Logics play a key role in the specification and verification of strategic properties in multi-agent systems (MAS) \cite{CalegariCMO21}.
One of the most influential temporal logics for MASs is alternating-time temporal logic (\ATLS), which extends \CTLS{}  with (implicit) quantification over strategies \cite{AlurHK02}. 
As an example, assume we want to formally verify that a set of agents $A$ can ensure that some temporal objective $\psi$ is ultimately fulfilled.
We can express this as the \ATLS{} formula $\llangle A \rrangle \ltlF \psi$, stating that the agents in $A$ have a joint strategy that ensures that all compatible executions eventually ($\ltlF$) satisfy $\psi$.
Likewise, we can express that coalition $A$ has \emph{no} strategy to ensure that $\psi$ is reached as $\llbracket A \rrbracket \ltlG \neg \psi$, i.e., for every strategy of $A$, some execution globally ($\ltlG$) satisfies $\neg \psi$.

However, in many situations, we are interested not only in the strategic (in)ability of a coalition but also in comparing the ability of multiple coalitions. 
For example, we might ask if some coalition $A$ is able to reach some goal $\psi$ strictly sooner (or more often) than some other coalition $A'$. 
Indeed, important game-theoretic concepts such as \emph{Shapley values} \cite{shapley1953value} are inherently based on the relative contribution of individual agents:
To compute the Shapley value for some agent $\agent{i}$, we need to \emph{compare} the ability of some arbitrary coalitions $A$ with that of $A \cup \{\agent{i}\}$.
Stating such comparison-based properties in \ATLS{} is impossible as \ATLS{} only considers a single path in isolation.

\paragraph{Hyperproperties.}

In contrast, the formal methods community has extensively studied properties that relate \emph{multiple} system executions and coined them \emph{hyperproperties} \cite{ClarksonS08}. 
In this paper, we bring the powerful concept of hyperproperties to the realm of AI and MASs.
We introduce \HyperATLSS{} -- a temporal logic that combines \emph{strategic reasoning} (as found in \ATLS{}), the ability to compare executions w.r.t.~a \emph{hyperproperty} (as, e.g.,  found in \HyperATLS{}), and the possibility of enforcing agents to \emph{share} strategies. 
As in \HyperATLS{} \cite{BeutnerF21,BeutnerF23}, we bind the outcome of a strategic interaction (resulting from an \ATLS{}-like quantification) to a \emph{path variable} and can then refer to atomic propositions on multiple paths. 
This combination of strategic reasoning and hyperproperties is needed for many AI-related properties; not only the information-flow properties envisioned in \cite{BeutnerF21,BeutnerF23}.
For example, \HyperATLSS{} allows us to express that coalition $A$ can reach $\psi$ strictly sooner than coalition $A'$ as follows.
\begin{align*}
	\llangle A \rrangle \pi\ldot \llbracket A' \rrbracket \pi'\ldot (\neg \psi_{\pi'}) \ltlU (\neg \psi_{\pi'} \land \psi_{\pi}).
\end{align*}
This formula states that there exist strategies for the agents in $A$, such that for every path $\pi$ under those strategies, it holds that: under \emph{every} strategy for the agents in $A'$, there exists some compatible path $\pi'$, such that $\pi$ reaches $\psi$ (denoted $\psi_{\pi}$) before $\pi'$ does (expressed using LTL's \emph{until} operator $\ltlU$).
Phrased differently, some strategy $A$ can ensure that $\psi$ is reached \emph{strictly} faster than any strategy for $A'$ could.

Note that this approach is very flexible, as we can compare $\pi$ and $\pi'$ w.r.t. to an arbitrary temporal property (e.g., $\pi$ reaches $\psi$ \emph{more often} than $\pi'$). 
This goes well beyond the capabilities of \ATLS{}, even when extended with quantitative operators (cf.~\Cref{sec:related-work}).

\paragraph{Strategy Sharing and \HyperATLSS{}.}

\HyperATLSS{} then extends \HyperATLS{} with the ability to force agents to follow the same strategy.
A \emph{sharing constraint} $\xi$ is a set of pairs of agents, and the \HyperATLSS{} formula $\llangle A \rrangle_\xi \, \pi\ldot \varphi$ requires that coalition $A$ can satisfy $\varphi$, under the assumption that all agents $(\agent{i}, \agent{j}) \in \xi$ play the \emph{same} strategy; similar to what is possible in strategy logic \cite{MogaveroMPV14,ChatterjeeHP10} in a \emph{non-hyper} setting.

\begin{example}\label{ex:sharing}
	Assume we deal with a MAS modeling a planning task with multiple robots and want to ensure that robots in coalition $A$ can reach some target state. 
	To keep the employment overhead as small as possible, we might ask if the robots can follow some optimal trajectory (i.e., reach the target as fast as possible), despite all using the \emph{same} strategy.
	We can express this in \HyperATLSS{} as follows
	\begin{align*}
		\llangle A \rrangle_{\{(\agent{i}, \agent{j}) \mid \agent{i}, \agent{j} \in A\}} \, \pi \ldot \llbracket A \rrbracket \, \pi'\ldot  (\neg \mathit{target}_{\pi'}) \ltlU  \mathit{target}_{\pi}
	\end{align*} 
	stating that \emph{all} robots in $A$ can use a \emph{shared} strategy (on path $\pi$) that reaches the target at least as fast as they can without the constraint that they must play the same strategy (path $\pi'$).
	Such shareable strategies are, e.g., key for scalable synthesis \cite{AttieE98}.
\end{example}

We provide further \HyperATLSS{} examples (such as determinism and good-enough synthesis)  in \Cref{sec:sub:eval-hyper}.

\paragraph{Model Checking.}

We show that model checking (MC) of \HyperATLSS{} on finite-state concurrent game structures (a standard model of MASs) is decidable.
As \HyperATLSS{} can relate multiple computation paths, we cannot employ the tree-automaton-based MC approach for \ATLS{} \cite{AlurHK02}. 
Instead, we develop a MC algorithm based on alternating word automata. 
Our algorithm iteratively simulates path quantification within an automaton, while ensuring that the strategy-sharing constraints between agents are fulfilled.  

\paragraph{Implementation.}

We implement our model-checking algorithm (for \emph{full} \HyperATLSS{}) in a tool we call \tool{}.
Using \tool{}, we can, for the first time, automatically check properties beyond the self-composition fragment of \HyperATLS{} -- the largest fragment supported by previous tools \cite{BeutnerF21,BeutnerF23} (cf.~\Cref{sec:related-work}).
We evaluate \tool{} by verifying a range of properties in MASs from the literature.
Our experiments show that our algorithm performs well on \emph{non}-hyper instances that could already be handled using existing solvers \cite{CermakLM15} and can successfully verify hyperproperties that cannot be expressed in any existing logic, let alone checked with any existing tool. 

\paragraph{Supplementary Material.}

Detailed proofs and additional material can be found in \ifFull{the appendix}{\cite{fullVersion}}.

\section{Related Work}\label{sec:related-work}

Various works have extended \ATLS{} with abilities to reason about probabilistic systems \cite{ChenL07a}, incomplete information \cite{BelardinelliLMR17,BerthonMM17,BelardinelliLM19}, and finite traces \cite{BelardinelliLMR18}.
All of these extension refer to individual paths and cannot express properties that relate \emph{multiple} paths.
While resource-aware extensions offer \emph{quantitative} reasoning \cite{AlechinaDL20,BouyerKMMMP19,HenzingerP06,JamrogaKP16,ChenL07a}, they still cannot state properties that go beyond computing quantities on individual paths.
Strategy logic (SL) treats strategies as first-class objects and can naturally express properties where some agents share the same strategy \cite{MogaveroMPV14,ChatterjeeHP10}.
While SL can compare the same strategy in different scenarios, it is limited to a \emph{boolean} combination of LTL properties on individual paths, i.e., we cannot compare different paths w.r.t.~a \emph{temporal} hyperproperty.
All properties we consider in \Cref{sec:intro,sec:sub:eval-hyper} cannot be expressed in SL.
Most existing hyperlogics, including \HyperLTL{} and \HyperCTLS{} \cite{ClarksonFKMRS14}, reason about paths in a (non-strategic) transition system.
\HyperATLS{} was the first temporal logic that combined \emph{strategic} reasoning with the ability to express hyperproperties \cite{BeutnerF21,BeutnerF23}.
This captures strategic \emph{information-flow policies} such as simulation-based non-interference \cite{MantelS01} and non-deducibility of strategies \cite{WittboldJ90}.
\HyperATLSS{} extends \HyperATLS{} with the ability to force agents to share the same strategy, which is useful for many AI-related properties (cf.~\Cref{ex:sharing}).
Moreover, automated verification of \HyperATLS{} was, so far, only possible for the self-composition fragment \cite{BeutnerF23}. 
In this fragment, all quantifiers are grouped together by constructing the self-composition of a MAS \cite{BartheDR11}, which reduces verification to a parity game.
While this fragment suffices for many security-related properties (which are naturally defined in terms of a self-composition), it does not capture any of the properties discussed in \Cref{sec:intro,sec:eval}.
In contrast, our model-checking algorithm (implemented in \tool{}) uses iterative quantifier elimination and is applicable to \emph{all} \HyperATLSS{} formulas.
In terms of tool support, the \mcmas{} tool family \cite{LomuscioQR09} implements a range of model checkers for strategic properties (e.g., specified in \ATLS{} or SL), often with a strong focus on knowledge \cite{FHMV1995,HoekW03a}.
Generally, knowledge properties \emph{are} hyperproperties; to ``know something'' means that it should hold on all indistinguishable paths, effectively relating multiple paths in a system \cite{BozzelliMP15,BeutnerFFM23}.
However, before \tool{}, none of the existing verifiers could check general hyperproperties (beyond knowledge) in MASs.

\section{Preliminaries}

For two functions $f : X \to Z$ and $f' : Y \to Z$ with $X \cap Y = \emptyset$, we define $f \oplus f' : X \cup Y \to Z$ as the union of both functions.
We let $\ap$ be a fixed finite set of atomic propositions and let $\agents$ be a fixed finite set of agents.
For a set of agent $A \subseteq \agents$, we define $\overline{A} := \agents \setminus A$.
Given some set $X$, we write $X^+$ (resp.~$X^\omega$) for the set of non-empty finite (resp.~infinite) sequences over $X$.
For $u \in X^\omega$ and $k \in \nat$, we write $x(k)$ for the $k$th element, $u[k, \infty]$ for the infinite suffix starting at position $k$, and $u[0, k]$ for the finite prefix up to $k$. 
As the underlying model of MASs, we use concurrent game structures (CGS).

\begin{definition}[\citet{AlurHK02}]
	A concurrent game structure is a tuple $\calG = (S, s_0, \moves, \kappa, L)$ where $S$ is a finite set of states, $s_0 \in S$ is an initial state, $\moves$ is a finite set of actions, $\kappa : S \times (\agents \to \moves) \to S$ is a transition function, and $L : S \to 2^\ap$ is a state labeling.
\end{definition}

An \emph{action vector} is a function $\bA : \agents \to \moves$ assigning an action to each agent. 
Given a state $s$ and action vector $\bA$, the transition function $\kappa$ determines the next state $\kappa(s, \bA)$.
A \emph{strategy} in $\calG$ is a function $f : S^+ \to \moves$, mapping finite paths to actions.
We denote the set of all strategies in $\calG$ with $\strats{\calG}$.
Given a state $s \in S$ and \emph{strategy vector} $\bF : \agents \to \strats{\calG}$ mapping each agent to a strategy, we can construct the path $\play_\calG(s, \bF) \in S^\omega$ that results from each agent acting according to the strategy defined by $\bF$.
Formally, we define $\play_\calG(s, \bF)$ as the unique infinite path $p \in S^\omega$ such that $p(0) = s$, and for every $k \in \nat$ we have $p(k+1) = \kappa\big(p(k), \bA_k\big)$ where $\bA_k$ is the action vector defined by $\bA_k(\agent{i}) := \bF(\agent{i})(p[0,k])$ for $\agent{i} \in \agents$.
That is, we map each agent $\agent{i}$ to the action selected by strategy $\bF(\agent{i})$ on the prefix $p[0,k]$, and update the state according to $\kappa$.

Note that our CGS definition does not include a protocol function $\varrho : S \times \agents \to (2^\moves \setminus \{\emptyset\})$ that, in each state, assigns each agent a set of allowed actions.
We can simulate the protocol $\varrho$ in the transition function $\kappa$ by ``rerouting'' every action that is invalid (according to $\varrho$) to some allowed action, effectively limiting the available actions of an agent.

\paragraph{\ATLS{}.}

We briefly recall the syntax and semantics of \ATLS{}.
Path and state formulas in \ATLS{} are defined as follows:
\begin{align*}
	\psi &:= a \mid \psi \land \psi \mid \neg \psi \mid \ltlN \psi  \mid \psi \ltlU \psi \mid \varphi\\
	\varphi &:= \llangle A \rrangle \, \psi \mid \llbracket A \rrbracket \, \psi
\end{align*}
where $a \in \ap$ and $A \subseteq \agents$.
The temporal $\ltlN$ refers to the \emph{next} timepoint, and $\psi_1 \ltlU \psi_2$ states that $\psi_2$ holds at some future timestep and $\psi_1$ holds at all timesteps \emph{until} then.
We use the standard Boolean connectives $\lor, \to, \leftrightarrow$, and Boolean constants $\top, \bot$, as well as the derived temporal operators \emph{eventually} $\ltlF \psi := \top \ltlU \psi$ and \emph{globally} $\ltlG \psi :=\neg \ltlF \neg \psi$.
For a path $p \in S^\omega$, we evaluate a path formula as expected:
\begin{align*}
	p &\models_\calG a &\text{iff } \quad&a \in L\big(p(0)\big)\\
	p &\models_\calG \psi_1 \land \psi_2 &\text{iff } \quad &p \models_\calG \psi_1 \text{ and } p \models_\calG \psi_2\\
	p &\models_\calG \neg \psi &\text{iff } \quad &p \not\models_\calG \psi\\
	p &\models_\calG \ltlN \psi &\text{iff } \quad &p[1, \infty] \models_\calG \psi\\
	p &\models_\calG \psi_1 \ltlU \psi_2 &\text{iff } \quad &\exists k \in \nat\ldot p[k, \infty] \models_\calG \psi_2 \text{ and } \\
	& \quad\quad\quad\quad\quad\quad\quad\quad\quad\quad \forall 0 \leq m < k\ldot p[m, \infty] \models_\calG \psi_1 \span \span\\
	p &\models_\calG \varphi &\text{iff } \quad&p(0) \models_\calG \varphi
\end{align*}
For a state $s \in S$, we define:
\begin{align*}
	s &\models_\calG \llangle A \rrangle \psi \quad \text{iff } \quad \exists \bF : A \to \strats{\calG}\ldot \\
	&\quad\quad\forall \bF' : \overline{A} \to \strats{\calG}\ldot \play_\calG\big(s, \bF \oplus \bF'  \big)] \models_\calG \psi\\
	s &\models_\calG \llbracket A \rrbracket \psi \quad \text{iff } \quad \forall \bF : A \to \strats{\calG}\ldot \\
	&\quad\quad\exists \bF' : \overline{A}  \to \strats{\calG}\ldot \play_\calG\big( s, \bF \oplus \bF'  \big)] \models_\calG \psi.
\end{align*}
That is, $\llangle A \rrangle \psi$ holds in state $s$ if the agents in $A$ can enforce $\psi$.
Formally, this means that there exists a strategy for each agent in $A$ (formalized as function $\bF$) such that -- no matter what strategy the agents in $\overline{A} = \agents \setminus A$ follow (function $\bF'$) -- the resulting path satisfies path formula $\psi$.
Conversely, $\llbracket A \rrbracket \psi$ states that coalition $A$ cannot \emph{avoid} $\psi$, i.e., every strategy for $A$ admits some path that satisfies $\psi$.

A CGS $\calG = (S, s_0, \moves, \kappa, L)$ satisfies $\varphi$, written $\calG \models_{\text{\ATLS{}}} \varphi$, if $s_0 \models_\calG \varphi$, i.e., $\varphi$ holds in the initial state.

\section{HyperATL$^*_S$}

In \ATLS{}, we can quantify over paths in the system (constructed by some strategy), but with each nested quantification, we create a new path, effectively losing the handle of the path(s) constructed previously.
Consequently, formula $\llangle A \rrangle \llangle A' \rrangle \psi$ is equivalent to $\llangle A' \rrangle \psi$. 
In \HyperATLSS{}, we want to explicitly state hyperproperties on \emph{multiple} paths.
To accomplish this, we extend \ATLS{} with the notation of \emph{path variables} and -- whenever we encounter a strategic path quantifier -- bind the outcomes of this quantification to such a variable, similar to \HyperCTLS{} \cite{ClarksonFKMRS14} and \HyperATLS{} \cite{BeutnerF21,BeutnerF23}.

\paragraph{Syntax.}

Let $\pathVars = \{\pi, \pi', \ldots\}$ be a set of \emph{path variables}.
Path and state formulas in \HyperATLSS{} are generated by the following grammar.
\begin{align*}
	\psi &:= a_\pi \mid \psi \land \psi \mid \neg \psi \mid \ltlN \psi \mid \psi \ltlU \psi  \mid \varphi_\pi\\
	\varphi &:= \llangle A \rrangle_\xi \, \pi \ldot \varphi \mid \llbracket A \rrbracket_\xi \, \pi \ldot \varphi \mid \psi
\end{align*}
where $a \in \ap$, $\pi \in \pathVars$, $A \subseteq \agents$, and $\xi \subseteq \agents \times \agents$ is a \emph{sharing constraint}.
We assume that nested state formulas are closed, i.e., for each atomic formula $a_\pi$, path variable $\pi$ is bound by some quantifier. 

Similar to \ATLS{}, formula $\llangle A \rrangle_\xi \, \pi \ldot \varphi$ states that there exists a strategy for coalition $A$ such that all paths under that strategy satisfy $\varphi$. 
However, differently from \ATLS{}, we bind this path to the path variable $\pi$.
We can then use path variables to refer to multiple paths via indexed atomic propositions.
The constraint $\xi$ poses restrictions on the agents' strategies: if $(\agent{i}, \agent{j}) \in \xi$, then agents $\agent{i}$ and $\agent{j}$ should play the same strategy.
We assume that for each quantifier $\llangle A \rrangle_\xi $ and $\llbracket A \rrbracket_\xi$, the sharing constraint satisfies $\xi \subseteq (A \times A) \cup (\overline{A} \times \overline{A})$, i.e., $\xi$ can enforce strategy sharing between agents in $A$ and between agents in $\overline{A}$.  
We omit $\xi$ if $\xi = \emptyset$.

\paragraph{Semantics.}

We evaluate \HyperATLSS{} formulas in the context of a \emph{path assignment}, which is a partial mapping $\Pi : \pathVars \rightharpoonup S^\omega$.
We write $\emptyset$ for the path assignment with an empty domain.
Given $k \in \nat$, we define $\Pi[k, \infty]$ as the assignment defined by $\Pi[k, \infty](\pi) := \Pi(\pi)[k, \infty]$, i.e., the assignment where all paths are (synchronously) shifted by $k$ positions. 
For a path $p \in S^\omega$, we define $\Pi[\pi \mapsto p]$ as the updated assignment that maps $\pi$ to $p$.
For path formulas, we define
\begin{align*}
	\Pi &\models_\calG a_\pi &\text{iff } \quad&a \in L\big(\Pi(\pi)(0)\big)\\
	\Pi &\models_\calG \psi_1 \land \psi_2 &\text{iff } \quad &\Pi \models_\calG \psi_1 \text{ and } \Pi \models_\calG \psi_2\\
	\Pi &\models_\calG \neg \psi &\text{iff } \quad &\Pi \not\models_\calG \psi\\
	\Pi &\models_\calG \ltlN \psi &\text{iff } \quad &\Pi[1, \infty] \models_\calG \psi\\
	\Pi &\models_\calG \psi_1 \ltlU \psi_2 &\text{iff } \quad &\exists k \in \nat\ldot \Pi[k, \infty] \models_\calG \psi_2 \text{ and } \\
	&  \quad\quad\quad\quad\quad\quad\quad\quad\quad\quad \forall 0 \leq m < k\ldot \Pi[m, \infty] \models_\calG \psi_1 \span \span\\
	\Pi &\models_\calG \varphi_\pi &\text{iff } \quad&\Pi(\pi)(0), \emptyset \models \varphi.
\end{align*}%
Whenever we check if $a_\pi$ currently holds, we check if $a$ holds on the path that is bound to $\pi$.
A nested state formula $\varphi_\pi$ holds iff $\varphi$ holds in the first state of the path bound to $\pi$.

Given a set of agents $A \subseteq \agents$ and sharing constraints $\xi$, we define $\mathit{shr}_\calG(A, \xi) := \{ \bF : A \to \strats{\calG} \mid \forall \agent{i}, \agent{j} \in A\ldot (\agent{i}, \agent{j}) \in \xi \Rightarrow \bF(\agent{i}) = \bF(\agent{j}) \}$, i.e., all strategy vectors for $A$ that satisfy the constraints in $\xi$. 
\HyperATLSS{} state formulas are evaluated in a state $s$ and path assignments $\Pi$. 
For each strategy quantifier, we construct a new path and bind this path to a path variable in $\Pi$:
\begin{align*}
	s, \Pi &\models_\calG \psi  &\text{iff } \quad &\Pi \models_\calG \psi\\
	s, \Pi &\models_\calG \llangle A \rrangle_\xi \, \pi \ldot \varphi  &\text{iff } \quad &\exists \bF \in \mathit{shr}_\calG(A, \xi) \ldot \\
	&\forall \bF' \! \in \! \mathit{shr}_\calG(\overline{A}, \xi) \ldot s, \Pi[\pi \!\mapsto \!\play_\calG\big(s, \bF \oplus \bF'  \big)] \models_\calG \varphi  \span\span \\
	s, \Pi &\models_\calG \llbracket A \rrbracket_\xi \, \pi \ldot \varphi  &\text{iff } \quad &\forall \bF \in \mathit{shr}_\calG(A, \xi) \ldot \\
	&\exists \bF' \! \in \! \mathit{shr}_\calG(\overline{A}, \xi) \ldot s, \Pi[\pi \!\mapsto \!\play_\calG\big(s, \bF \oplus \bF'  \big)] \models_\calG \varphi  \span\span
\end{align*}
Take $\llangle A \rrangle_\xi \, \pi\ldot \varphi $ as an example.
As in \ATLS{}, we existentially quantify over strategies for the agents in $A$ (subject to the condition that they respect the sharing constraints in $\xi$), followed by universal quantification over strategies for agents in $\overline{A}$ (again, subject to $\xi$).
The resulting strategy vector $\bF \oplus \bF'$ then yields a unique path $\play_\calG\big(s, \bF \oplus \bF'  \big)$, which we bind to path variable $\pi$ and continue evaluation of $\varphi$.
Note that in case $\xi = \emptyset$, the quantification behavior is very close to that of \ATLS{} as $\mathit{shr}_\calG(A, \emptyset)$ contains all functions $A \to \strats{\calG}$.
The important difference to \ATLS{} is that once we have constructed the path $\play_\calG\big(s, \bF \oplus \bF'  \big)$, we do not immediately evaluate a path formula  but rather add the path to our current assignment. 
Without sharing constraints, \HyperATLSS{} corresponds to \HyperATLS{} \cite{BeutnerF21,BeutnerF23} and is strictly more expressive than \ATLS{}.

We say that $\calG$ satisfies $\varphi$, written $\calG \models \varphi$, if $s_0, \emptyset \models_\calG \varphi$.

\begin{figure}[!t]
	\centering
	\begin{tikzpicture}
		\node[circle, inner sep=2pt,draw, thick, label=below:{\small $\emptyset$}] at (0,0) (n0) {\small $s_0$};

		\node[circle, inner sep=2pt, draw, thick, label=below:{\small $\emptyset$}] at (4,0) (n1) {\small $s_1$};

		\node[circle, inner sep=2pt, draw, thick, label=below:{\small $\{w\}$}] at (2,0) (n2) {\small $s_2$};

		\coordinate[] (c1) at (0, 0.7);
		\coordinate[] (c2) at (4, 0.7);
		
		\draw[-, thick] (n0) to (c1);
		\draw[-, thick] (c1) to node[above,align=center] {\scriptsize $(\texttt{g}, \texttt{r}, \texttt{nr})$, $(\texttt{g}, \texttt{nr}, \texttt{r})$} (c2);
		\draw[->, thick] (c2) to (n1);
		
		\draw[->, thick] (n1) to node[below] {\scriptsize$(\_, \_, \_)$} (n2);
		
		\draw[->, thick] (n2) to[bend left] node[below] {\scriptsize$(\_, \_, \_)$} (n0);
		
		\draw[->, thick] (-0.4, 0.4) to[] (n0);
		
		\draw[->, thick] (n0) to[] node[above, sloped] {\scriptsize$(\texttt{g}, \texttt{r}, \texttt{r})$} (n2);
		
		\draw[->, thick] (n0) to [loop left] node[left,align=center] {\scriptsize $(\texttt{ng}, \_, \_)$\\\scriptsize $(\texttt{g}, \texttt{nr}, \texttt{nr})$} (n0);
		
	\end{tikzpicture}

	\vspace{-1mm}

	\caption{A simple CGS with $\agents = \{\agent{\mathit{sched}}, \agent{\mathit{W1}}, \agent{\mathit{W2}}\}$. Each edge has the form $(a_1, a_2, a_3)$ where $a_1, a_2$, and $a_3$ are the actions of $\agent{\mathit{sched}}$, $\agent{\mathit{W1}}$, and $\agent{\mathit{W2}}$, respectively. We write ``$\_$'' for an arbitrary action.}\label{fig:cgs}
\end{figure}

\begin{example}[Running Example]\label{ex:cgs}
	Let us consider a very simple CGS between agents $\agents = \{\agent{\mathit{sched}}, \agent{\mathit{W1}}, \agent{\mathit{W2}}\}$, describing a scheduler and  two worker agents.
	The scheduler $\agent{\mathit{sched}}$ can choose actions $\{\texttt{g}, \texttt{ng}\}$ modeling a \textbf{g}rant or \textbf{n}o \textbf{g}rant, and each of the workers can choose actions $\{\texttt{r}, \texttt{nr}\}$ modeling a \textbf{r}equest to work or \textbf{n}o \textbf{r}equest to work.
	We model the dynamics of the CGS in \Cref{fig:cgs}.
	If the scheduler chooses $\texttt{ng}$ or both of the workers do not request to work, we remain in idle state $s_0$.
	If the scheduler grants work and \emph{both} workers request to work, we directly transition to the working state $s_2$ where proposition $w \in \ap$ holds. 
	If only one of the workers requests work, we also transition to $s_2$ but pass through $s_1$, i.e., the work is delayed by one step. 
	
	Let us assume we want to verify that coalition $\{\agent{\mathit{sched}}, \agent{\mathit{W1}}, \agent{\mathit{W2}}\}$ can reach the work state $s_2$ (strictly) faster than $\{\agent{\mathit{sched}}, \agent{\mathit{W1}}\}$.
	As argued in the introduction, we can express this using the following \HyperATLSS{} formula \\[-2mm]
	\scalebox{0.95}{\parbox{\linewidth}{
	\begin{align*}
		\llangle \agent{\mathit{sched}}, \agent{\mathit{W1}}, \agent{\mathit{W2}} \rrangle \, \pi \ldot \llbracket \agent{\mathit{sched}}, \agent{\mathit{W1}} \rrbracket \, \pi'\ldot (\neg \mathit{w}_{\pi'}) \ltlU (\neg \mathit{w}_{\pi'} \land \mathit{w}_{\pi}).
	\end{align*} 
	}}\\[-1mm]
	This formula holds in the above CGS: $\{\agent{\mathit{sched}}, \agent{\mathit{W1}}, \agent{\mathit{W2}}\}$ can construct a path $\pi$ where $w$ holds in the second step, whereas $\{\agent{\mathit{sched}}, \agent{\mathit{W1}}\}$ can, on their own, only ensure that $w$ holds in the third step on $\pi'$ (at the earliest). 
\end{example}

\paragraph{\HyperATLSS{} and \ATLS{}.}

\HyperATLSS{} subsumes \ATLS{}:

\begin{restatable}{proposition}{atlsInHyperatlss}\label{prop:to-hyperatlss}
	For every \ATLS{} formula $\varphi$, there exists an effectively computable \HyperATLSS{} formula $\varphi'$ such that for every CGS $\calG$, $\calG \models_{\text{\ATLS{}}} \varphi$ iff $\calG \models \varphi'$.
\end{restatable}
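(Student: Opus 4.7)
The plan is to define the translation $\varphi \mapsto \varphi'$ by a straightforward mutual structural induction on \ATLS{} path and state formulas, and then prove correctness by a parallel induction comparing the \ATLS{} semantics to the \HyperATLSS{} semantics under a singleton path assignment.

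Concretely, I would define two translations in parallel: for every \ATLS{} state formula $\varphi$ a \HyperATLSS{} state formula $\varphi'$, and, for every \ATLS{} path formula $\psi$ and every path variable $\pi$, a \HyperATLSS{} path formula $\psi'_\pi$ in which $\pi$ is the only free path variable. For path formulas, the translation just pushes $\pi$ through the LTL skeleton: $a \mapsto a_\pi$, and the Boolean/temporal operators commute with the translation in the obvious way; at a nested state formula $\varphi$ (viewed as a path formula), we set $\varphi'_\pi := (\varphi')_\pi$, using the just-translated state formula. For state formulas, I eliminate each \ATLS{} quantifier by binding its path to a fresh variable and dropping the sharing constraint, namely
\begin{align*}
    (\llangle A \rrangle \psi)' &:= \llangle A \rrangle_{\emptyset}\, \pi \ldot \psi'_\pi, \\
    (\llbracket A \rrbracket \psi)' &:= \llbracket A \rrbracket_{\emptyset}\, \pi \ldot \psi'_\pi,
\end{align*}
where $\pi$ is chosen fresh so that no capture occurs.

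For correctness, I prove simultaneously by structural induction that (i) for any \ATLS{} path formula $\psi$, any path $p \in S^\omega$, and any path assignment $\Pi$ with $\Pi(\pi) = p$, one has $p \models_\calG \psi \iff \Pi \models_\calG \psi'_\pi$; and (ii) for any \ATLS{} state formula $\varphi$ and any state $s$, $s \models_\calG \varphi \iff s, \emptyset \models_\calG \varphi'$. The atomic, Boolean, and $\ltlN$ cases are immediate from the definitions, while the $\ltlU$ case follows because shifting the single path $p$ is the same as the synchronous shift $\Pi[k,\infty]$ when $\pi$ is the only bound variable. For the quantifier cases, I use the fact, already noted in the paper, that $\mathit{shr}_\calG(A, \emptyset) = A \to \strats{\calG}$, so the \HyperATLSS{} quantification ranges over exactly the same strategy vectors as the \ATLS{} quantification; the freshly bound path $\play_\calG(s, \bF \oplus \bF')$ is then handed to the inductive hypothesis for $\psi'_\pi$.

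The only mildly delicate point is the treatment of nested state formulas sitting inside path formulas. Here the \ATLS{} semantics re-evaluates $\varphi$ at $p(0)$ with no surrounding context, while the \HyperATLSS{} semantics evaluates $\varphi_\pi$ by going to $\Pi(\pi)(0)$ with the empty path assignment; since we chose $\pi$ fresh and our translation is closed, these coincide, and the nested induction hypothesis (ii) applies. This is the one step where I would double-check that quantifier binding is alpha-renamed to keep path variables disjoint, but beyond this bookkeeping the argument is routine.
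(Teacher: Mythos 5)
Your proposal is correct and follows essentially the same route as the paper: push a path variable through the LTL skeleton, translate each \ATLS{} quantifier to a \HyperATLSS{} quantifier with empty sharing constraint (using $\mathit{shr}_\calG(A,\emptyset) = A \to \strats{\calG}$), and conclude by a mutual structural induction on path and state formulas. The only difference is cosmetic: the paper reuses a single fixed variable $\dot{\pi}$ (re-bound at every quantifier, which is safe since \ATLS{} quantification discards the outer path anyway), whereas you introduce fresh variables and handle the alpha-renaming bookkeeping explicitly; both are fine.
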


\section{Model Checking of \HyperATLSS{}}
 
While the extension of \ATLS{} to reason about hyperproperties required only minor modifications to its syntax, the subtle changes bring major complications in terms of model checking. 
In particular, the model-checking algorithm for \ATLS{} proposed by \citet{AlurHK02} is no longer applicable:
In \ATLS{},  checking if $\llangle A \rrangle \psi$ holds in some state $s$ can be reduced to the non-emptiness of the intersection of two tree automata.
One accepts all trees that represent possible strategies by the agents in $A$, and one accepts all trees whose paths satisfy the path formula $\psi$.
In \HyperATLSS{}, this is not possible: In a formula $\llangle A \rrangle_\xi \, \pi. \varphi$,  the satisfaction of $\varphi$ does not only depend on $\pi$ but also on path variables that are quantified before (outside).

\subsection{Alternating Automata}

Instead, our model-checking algorithm uses automata to  ``summarize'' path assignments that satisfy subformulas, similar to previous hyperlogics such as \HyperLTL{} \cite{FinkbeinerRS15,BeutnerFTacas23}, and \HyperATLS{} \cite{BeutnerF21,BeutnerF23}.
To handle the strategic interaction found in MASs, we rely on \emph{alternating automata}, i.e., automata that alternate between existential (non-deterministic) and universal transitions.

\begin{definition}
	An alternating parity automaton (APA) over alphabet $\Sigma$ is a tuple $\calA = (Q, q_0, \delta, c)$ where $Q$ is a finite set of states, $q_0 \in Q$ is an initial state, $c : Q \to \nat$ is a state coloring, and $\delta : Q \times \Sigma \to \bool^+(Q)$ is a transition function that maps pairs of state and letter to a positive boolean formula over $Q$ (denoted with $\bool^+(Q)$).
\end{definition}

Formally, we model the alternation in APAs by viewing the transitions as positive boolean formulas over states (i.e., formulas formed using only conjunctions and disjunctions). 
For example, if $\delta(q, l) = q_1 \lor (q_2 \land q_3)$, we can -- from state $q \in Q$ and upon reading letter $l \in \Sigma$ -- either move to state $q_1$ or move to \emph{both} $q_2$ and $q_3$ (i.e., spawn two copies of our automaton, one starting in state $q_2$ and one in $q_3$).
We write $\calL(\calA) \subseteq \Sigma^\omega$ for the set of all infinite words that are accepted by $\calA$, i.e., all infinite words where we can construct a run tree such that for all paths, the \emph{minimal} color that occurs infinity many times (as given by $c$) is \emph{even}. 
See \ifFull{\cite{Vardi95} and the appendix}{\cite{Vardi95,fullVersion}} for details.

\begin{figure}[!]
	\centering
	\begin{tikzpicture}
	
	\node[rectangle, draw, thick, label={[yshift=-1pt]above:{\small \textbf{0}}}] at (-0.7,0) (n0) {$q_0$};
	
	\node[circle, draw, inner sep=1pt] at (1,0) (d0) {\tiny$\land$};
	
	\coordinate[] (c0) at (1, -0.5);
	
	\coordinate[] (c1) at (-0.7, -0.5);
	
	\node[circle, draw, inner sep=1pt] at (2,0) (d1) {\tiny$\lor$};

	\node[rectangle, draw, thick, label={[yshift=1pt]below:{\small \textbf{1}}}] at (3,0.7) (n1) {$q_1$};
	
	\node[rectangle, draw, thick, label={[yshift=-1pt]above:{\small \textbf{1}}}] at (3,-0.7) (n2) {$q_2$};
	
	\node[rectangle, draw, thick,label={[yshift=-1pt]above:{\small \textbf{0}}}] at (4.5,0) (n3) {$q_3$};
	
	\draw[->, thick] (-1.2, 0) to (n0);

	\draw[->, thick] (n0) to node[above,align=center] {\small $a, b, c$} (d0);
	
	\draw[->, thick] (d0) to (d1);
	
	\draw[-, thick] (d0) to (c0);
	\draw[-, thick] (c0) to (c1);
	\draw[->, thick] (c1) to (n0);
	
	\draw[->, thick] (d1) to (n1);
	\draw[->, thick] (d1) to (n2);

	\draw[->, thick] (n1)  to node[above,align=center, sloped] {\small $a$} (n3);
	
	\draw[->, thick] (n2) to node[below,align=center, sloped] {\small $b$} (n3);

	\draw[->, thick, loop left] (n1) to node[left,align=center] {\small $b, c$} (n1);
	\draw[->, thick, loop left] (n2) to node[left,align=center] {\small $a, c$} (n2);

	\draw[->, thick, loop right] (n3) to node[right,align=center] {\small $a, b, c$} (n3);
	
\end{tikzpicture}

	\caption{Example APA over alphabet $\Sigma = \{a, b, c\}$.}\label{fig:apa}
\end{figure}

\begin{example}\label{ex:apa}
	Consider the APA in \Cref{fig:apa}.
	We display the color of each state and visualize transition formulas using $\land$ and $\lor$ nodes. 
	For example, $\delta(q_0, a) = \delta(q_0, b) = \delta(q_0, c) = q_0 \land (q_1 \lor q_2)$, i.e., whenever reading letter $a, b$, or $c$ in $q_0$ we start a fresh run from $q_0$ and \emph{at the same time} start a run from \emph{either} $q_1$ or $q_2$. 
	To derive the language of the APA, we first observe that state $q_1$ (resp.~$q_2$) accepts all words that contain at least one $a$ (resp.~$b$) (note that the color of $q_1, q_2$ is odd). 
	In the initial state $q_0$, we restart a run from $q_0$ \emph{and} transition to either $q_1$ or $q_2$.
	The language thus contains exactly those words that contain $a$ or $b$ infinitely often.
\end{example}

\paragraph{Deterministic Automata.}

Our model-checking algorithm relies on the fact that we can \emph{determinize} APAs. 
We say $\calA$ is a \emph{deterministic} parity automaton (DPA) if we can view $\delta$ as a function $Q \times \Sigma \to Q$ that assigns  a \emph{unique} successor state to each state, letter pair.

\begin{proposition}[\citet{MiyanoH84}]\label{prop:toDPA}
	For any APA $\calA$ with $n$ states, we can effectively compute a DPA $\calA'$ with at most $2^{2^{\calO(n)}}$ states such $\calL(\calA) = \calL(\calA')$.
\end{proposition}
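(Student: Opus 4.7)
The plan is to obtain the DPA in two successive singly-exponential steps: first eliminate alternation to get a nondeterministic parity automaton (NPA), then determinize that NPA. Since each step blows up the state space by a single exponential, composing them gives the claimed $2^{2^{\calO(n)}}$ bound.

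For the first step, I would turn $\calA$ into an equivalent NPA $\calB$ of size $2^{\calO(n)}$ using a generalized Miyano--Hayashi construction. The state of $\calB$ tracks, at each point along an input word, a whole ``antichain'' of configurations of $\calA$, namely a set $M \subseteq Q$ representing all states the alternating automaton is currently in. A nondeterministic move of $\calB$ on letter $l$ guesses, for each $q \in M$, a minimal set of successors satisfying $\delta(q, l) \in \bool^+(Q)$, and takes the union as the next macro-state. For a plain Büchi acceptance, Miyano--Hayashi pair this set with a shrinking ``obligation'' subset of states that still owe a visit to the Büchi set; the obligation is reset once emptied, and $\calB$ accepts if it is emptied infinitely often. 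To handle parity with priorities $c : Q \to \nat$, I would either (i) first translate $\calA$ into an equivalent alternating Büchi automaton using a standard index-appearance or latest-appearance-record construction (which multiplies the state count by a factor at most singly-exponential in $n$ because the index can be bounded by $n$), and then apply the Büchi version of Miyano--Hayashi; or (ii) use the direct extension of Miyano--Hayashi to parity, where the subset construction is augmented with one obligation set per odd priority and a round-robin scheduler between them. Either variant produces an NPA $\calB$ with $2^{\calO(n)}$ states and $\calO(n)$ priorities such that $\calL(\calB) = \calL(\calA)$.

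For the second step, I would apply Safra's determinization (or Piterman's improved variant) to $\calB$. Starting from an NPA with $N = 2^{\calO(n)}$ states, this produces a DPA $\calA'$ whose states are labeled Safra trees of height at most $\calO(N)$ and whose total number of states is at most $2^{\calO(N \log N)} = 2^{2^{\calO(n)}}$, with a parity acceptance whose index is polynomial in $N$. By construction $\calL(\calA') = \calL(\calB) = \calL(\calA)$, yielding the DPA with the claimed state bound.

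The main obstacle is the first step: the original Miyano--Hayashi construction is stated for alternating Büchi automata, whereas our $\calA$ is a parity automaton with arbitrary priorities. I need to be careful that lifting to parity does not cost more than a single exponential, so that the composition stays within $2^{2^{\calO(n)}}$. Using the index/latest-appearance-record reduction keeps the blow-up polynomial in $n$ before taking the subset construction, which is sufficient; the remaining details (correctness of the guess-and-check move, soundness of the obligation tracking, and that accepting runs of $\calA$ correspond to accepting runs of $\calB$) are standard and follow the textbook arguments, e.g., from \citet{Vardi95}.
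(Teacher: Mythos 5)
The paper never proves this proposition; it is imported from the literature, and the intended argument is exactly your two-step architecture (alternation removal \`a la \citet{MiyanoH84}, then Safra/Piterman determinization). The gaps are in how you handle the parity acceptance, and they are genuine. Your option (i) is inconsistent with your own conclusion: if the translation from alternating parity to alternating B\"uchi really multiplied the state space by a singly-exponential factor (as an appearance record over up to $n$ priorities would), then applying Miyano--Hayashi \emph{afterwards} would give a nondeterministic automaton of doubly-exponential size and, after determinization, a triply-exponential DPA --- not an NPA with $2^{\calO(n)}$ states. So you would have to show that the parity-to-B\"uchi step costs only polynomially on \emph{alternating} automata, or avoid it, and neither is done. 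This is not a routine detail: you cannot just compose the APA with a nondeterministic per-branch acceptance monitor, because in a run tree the monitor's guesses are shared along the common prefixes of different branches, and branches that need different ``minimal even colour'' commitments but separate arbitrarily late make any such product unsound. This is precisely why alternation removal beyond B\"uchi is done with ranking-based (Kupferman--Vardi-style) constructions rather than Miyano--Hayashi obligation sets; your option (ii), ``one obligation set per odd priority with a round-robin scheduler,'' checks a conjunction of B\"uchi-type requirements (a generalized-B\"uchi/Streett-like condition), which is not the per-branch min-parity condition.

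Moreover, the intermediate bound you claim is unattainable in general: a universal co-B\"uchi automaton is a two-colour APA and is the dual of an $n$-state nondeterministic B\"uchi automaton, so by Michel's lower bound for B\"uchi complementation any equivalent nondeterministic B\"uchi (hence, up to a polynomial factor, parity) automaton needs $2^{\Omega(n\log n)}$ states; no ``generalized Miyano--Hayashi'' with $2^{\calO(n)}$ states can exist for parity APAs. The result itself survives with the correct ingredients: ranking-based de-alternation yields a nondeterministic automaton of size $2^{\mathrm{poly}(n)}$ (for B\"uchi APAs, Miyano--Hayashi verbatim gives $3^n$), and determinization then yields a DPA that is doubly exponential in $n$ --- with a linear second exponent exactly in the B\"uchi/bounded-colour case and a polynomial one in general, which is all the paper's $2n$-\EXPTIME{} analysis relies on. As written, though, your accounting does not establish the stated bound.
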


\subsection{Model Checking Algorithm}

We are now in a position to outline our model-checking algorithm. 
A high-level description is given in \Cref{alg:main-alg}.
Here, we write $\lQ A \rQ$ as a shorthand for either $\llangle A \rrangle$ or $\llbracket A \rrbracket$.

\paragraph{Nested State Formulas.}
Initially, our algorithm recursively checks nested state formulas and replaces them with fresh atomic propositions \cite{EmersonH86}.
Concretely, given a closed state formula $\varphi = \lQ A_1 \rQ_{\xi_1} \pi_1 \ldots \lQ A_n \rQ_{\xi_n} \pi_n\ldot \psi$, we first extract all state formulas that are \emph{nested} in the path formula $\psi$ (\cref{line:extract-nested-formula}). 
For each nested state formula $\varphi'$, we \textbf{(1)} compute all states in which $\varphi'$ holds (using a recursive call to \lstinline[style=default, language=custom-lang]|modelCheck|); \textbf{(2)} mark all those states with a fresh atomic proposition $p_{\varphi'}$ by modifying the labeling function $L$ of $\calG$  (\cref{line:extend-label}); and \textbf{(3)} replace all occurrences of $\varphi'_\pi$ within $\psi$ with $(p_{\varphi'})_\pi$ (\cref{line:rename-label}).

\paragraph{Eliminating Path Quantification.}

Afterward, $\psi$ contains no nested state formulas, and we can tackle the strategic quantifiers. 
For each state $\dot{s} \in S$, we check if $\dot{s}, \emptyset \models_\calG \varphi$, and -- if it does -- add it to the solution set $\mathit{Sol}$ (\cref{line:emptinessCheck}).
Our main idea to check $\dot{s}, \emptyset \models_\calG \varphi$ is to iteratively eliminate paths $\pi_1, \ldots, \pi_n$ by \emph{simulating $\calG$ using the alternation available in APAs} while summarizing path assignments that satisfy the formula from the fixed state $\dot{s} \in S$.
To enable automata-based reasoning about path assignments, i.e., mappings $\Pi : V \to S^\omega$ for some $V \subseteq \pathVars$, we \emph{zip} such an assignment into an infinite word:
Given $\Pi : V \to S^\omega$ we define $\mathit{zip}(\Pi) \in (V \to S)^\omega$ as the infinite word over functions $V \to S$, defined by $\mathit{zip}(\Pi)(k)(\pi) := \Pi(\pi)(k)$ for $k \in \nat$.

\begin{definition}\label{def:equiv}
	Assume $\varphi$ is a \HyperATLSS{} formula with free path variables $V \subseteq \pathVars$.
	We say an automaton $\calA$ over $V \to S$ is \emph{$(\calG, \dot{s})$-equivalent to $\varphi$} if for every path assignment $\Pi : V \to S^\omega$ we have $\mathit{zip}(\Pi) \in \calL(\calA)$ if and only if $\dot{s}, \Pi \models_\calG \varphi$.
\end{definition}

Now assume that $\varphi = \lQ A_1 \rQ_{\xi_1} \pi_1 \ldots \lQ A_n \rQ_{\xi_n} \pi_n\ldot \psi$ is the state formula we want to check in state $\dot{s}$.
If we \emph{could} compute a $(\calG, \dot{s})$-equivalent automaton $\calA_{{\varphi}}$ for $\varphi$, we can immediately check whether $\dot{s}, \emptyset \models_\calG {\varphi}$ by testing if $\mathit{zip}(\emptyset) \in \calL(\calA_{{\varphi}})$.
Our main theoretical result is that we can construct such an automaton \emph{incrementally}:
We begin with a $(\calG, \dot{s})$-equivalent automaton $\calA_{\psi}$ for the body $\psi$; we then use $\calA_{\psi}$ to construct a $(\calG, \dot{s})$-equivalent automaton $\calA_{\lQ A_n \rQ_{\xi_n} \pi_n\ldot \psi}$ for $\lQ A_n \rQ_{\xi_n} \pi_n\ldot \psi$; and so forth, finally yielding the desired automaton $\calA_{\varphi}$ that is $(\calG, \dot{s})$-equivalent to $\varphi$. 
In each step, we apply the construction from the following theorem:

\begin{algorithm}[!t]
\caption{Model-checking algorithm for \HyperATLSS{}.}\label{alg:main-alg}

\begin{code}
def modelCheck($\calG$, $\varphi = \lQ A_1 \rQ_{\xi_1} \pi_1 \ldots \lQ A_n \rQ_{\xi_n} \pi_n\ldot \psi$):
@@@@for $\varphi'$ in nestedStateFormulas($\psi$) do (*\label{line:extract-nested-formula}*)
@@@@@@@@@$S_{\varphi'}$ = modelCheck($\calG$, $\varphi'$) (*\label{line:rec}*)
@@@@@@@@@$L$ = $\lambda s\ldot \begin{cases}
\begin{aligned}
&L(s) &\text{if} \quad &s \not\in S_{\varphi'}\\
&L(s) \cup \{p_{\varphi'}\} &\text{if} \quad &s \in S_{\varphi'}
\end{aligned}
\end{cases}$(*\label{line:extend-label}*)
@@@@@@@@@$\psi$ = $\psi\big[\varphi'_{\pi_1} / (p_{\varphi'})_{\pi_1} \big] \cdots\big[\varphi'_{\pi_n} / (p_{\varphi'})_{\pi_n} \big] $ (*\label{line:rename-label}*)
@@@@$\mathit{Sol}$ = $\emptyset$
@@@@for $\dot{s} \in S$ do
@@@@@@@@@$\calA$ = LTLtoAPA($\psi$) (*\label{line:ltlToApa}*)
@@@@@@@@@for $j$ from $n$ to $1$ do  
@@@@@@@@@@@@@@$\calA$ = product($\calG$, $\dot{s}$, $\calA$, $\lQ A_j\rQ_{\xi_j} \pi_j$) (*\label{line:product}*)
@@@@@@@@@if $\mathit{zip}(\emptyset) \in \calL(\calA)$ then (*\label{line:emptinessCheckIf}*)
@@@@@@@@@@@@@@$\mathit{Sol}$ = $\mathit{Sol} \cup \{\dot{s}\}$ (*\label{line:emptinessCheck}*)
@@@@return $\mathit{Sol}$
\end{code}
\end{algorithm}

\begin{figure*}[!t]
	
	\begin{minipage}[b]{0.5\linewidth}
		\begin{subfigure}{\linewidth}
			\centering
			\small
			\scalebox{0.95}{
				\begin{tikzpicture}
					\node[rectangle, draw, thick, label=below:{\small \textbf{1}}] at (0,0) (n0) {$q_0$};
					
					\node[rectangle, fill=mygreen!13, draw, thick, label=below:{\small \textbf{0}}] at (-3.5,0) (n1) {$q_1$};
					
					\node[rectangle, fill=myred!13, draw, thick,label=below:{\small \textbf{1}}] at (3.5,0) (n2) {$q_2$};
					
					\draw[->, thick] (-0.5, 0.5) to[] (n0);
					
					\draw[->, thick] (n0) to node[below,sloped,align=left] {\small $[\pi\! \mapsto \!s_2, \pi'\! \mapsto\! s_0]$,\\\small $[\pi\! \mapsto \!s_2, \pi'\! \mapsto\! s_1]$} (n1);
					
					\draw[->, thick] (n0) to node[below, sloped,align=left] {\small $[\pi \!\mapsto \!\_, \pi' \!\mapsto\! s_2]$} (n2);
					
					\draw[->, thick] (n0) to [loop above] node[above,align=left] {\small $[\pi \mapsto s_0, \pi' \mapsto s_0]$, $[\pi \mapsto s_0, \pi' \mapsto s_1]$,\\\small $[\pi \mapsto s_1, \pi' \mapsto s_0]$, $[\pi \mapsto s_1, \pi' \mapsto s_1]$} (n0);
					
					\draw[->, thick] (n1) to [loop above] node[above,align=center] {\small $\top$} (n1);
					\draw[->, thick] (n2) to [loop above] node[above,align=center] {\small $\top$} (n2);
					
				\end{tikzpicture}
			}
			\subcaption{}\label{fig:sub:apa-dpa}
		\end{subfigure}\\[4mm]
		\begin{subfigure}{\linewidth}
			\centering
			\small
			\scalebox{0.95}{
				\begin{tikzpicture}
					\draw[draw=mygreen, very thick, dashed, fill=mygreen!13] (-2.2,0.5) -- (-2.2, -0.5) -- (-3.3, -0.5) -- (-3.3, 0.5) -- (-2.2, 0.5);

					\draw[draw=mygreen, very thick, dashed, fill=mygreen!13] (-4.6,-1.9) -- (-3.4, -1.9) -- (-3.4, -3.3) -- (-4.6, -3.3) -- (-4.6,-1.9);
					\draw[draw=myred, very thick, dotted, fill=myred!13] (4.6,-1.9) -- (3.4, -1.9) -- (3.4, -3.3) -- (4.6, -3.3) -- (4.6,-1.9);
					
					\node[rectangle, draw, thick, label=above:{\small \textbf{1}}] at (0,0) (n0) {$q_0, s_0$};
					
					\node[rectangle, draw, thick, label=below:{\small \textbf{1}}] at (-1.3,-2.5) (n1) {$q_0, s_1$};

					\node[rectangle, draw, thick, label=below:{\small \textbf{1}}] at (1.3,-2.5) (n2) {$q_0, s_2$};
					
					\node[rectangle, draw, thick, label=above:{\small \textbf{0}}] at (-4,-2.5) (n3) {$q_1, s_2$};
					
					\node[rectangle, draw, thick, label=above:{\small \textbf{1}}] at (4,-2.5) (n5) {$q_2, s_0$};

					\draw[->, thick] (-0.5, 0.5) to (n0);
					
					\node[circle, draw, inner sep=1pt] at (0,-1.2) (d0) {\tiny$\land$};
					\node[circle, draw,inner sep=1pt] at (-1,-1.2) (d1) {\tiny$\lor$};
					\node[circle, draw,inner sep=1pt] at (0,-1.8) (d2) {\tiny$\lor$};
					
					\node[circle, draw,inner sep=1pt] at (1.8,-1.2) (d3) {\tiny$\lor$};
					
					\node[align=left,black!40] at (3.3,-0.2) (l1) {\scriptsize $[\agent{\mathit{sched}} \! \mapsto\! \texttt{ng}, \agent{\mathit{W1}}\! \mapsto\! \_]$};
					\draw[-,dotted, thick, black!40] (l1) -- (d3);
					
					\node[align=left,black!40] at (2.0,-1.6) (l2) {\scriptsize $[\agent{\mathit{sched}} \!\mapsto \!\texttt{g}, \agent{\mathit{W1}} \!\mapsto \!\texttt{r}]$};
					\draw[-,dotted, thick, black!40] (l2) -- (d2);
					
					\node[align=left,black!40] at (-3.0,-1.2) (l3) {\scriptsize $[\agent{\mathit{sched} \!}\mapsto\! \texttt{g}, \agent{\mathit{W1}}\! \mapsto\! \texttt{nr}]$};
					\draw[-,dotted, thick, black!40] (l3) -- (d1);
					
					\draw[->, thick] (n0) to node[right,align=left] {\small $[\pi \mapsto s_0]$,\\\small$[\pi \mapsto s_1]$} (d0);
					
					\draw[->, thick] (d0) to (d1);
					\draw[->, thick] (d0) to (d2);
					\draw[->, thick] (d0) to (d3);
					
					\draw[->, thick] (d1) -- (n0);
					\draw[->, thick] (d1) -- (n1);

					\draw[->, thick] (d2) to (n1);
					\draw[->, thick] (d2) to (n2);
					
					\draw[->, thick] (d3) |- (n0);
					
					\draw[->, thick] (n1) to node[below,align=center] {\small $[\pi \mapsto s_2]$} (n3);
					
					\draw[->, thick] (n1) to node[below,align=left] {\small $[\pi \mapsto s_0]$,\\\small$[\pi \mapsto s_1]$} (n2);

					\draw[->, thick] (n2) to node[below,align=left] {\small $[\pi \mapsto s_0]$,\\\small$[\pi \mapsto s_1]$,\\\small$[\pi \mapsto s_2]$} (n5);

					\node[circle, draw, inner sep=1pt] at (-2.5,0) (d4) {\tiny$\land$};
					\draw[->, thick] (n0) to node[above,align=center] {\small $[\pi \mapsto s_2]$} (d4);
					
					\draw[->, thick, dashed, black!60] (d4) to (-3, 0);
					\draw[->, thick, dashed, black!60] (d4) to (-3, -0.4);
					\draw[->, thick, dashed, black!60] (d4) to (-3, 0.4);

					\draw[->, thick, dashed, black!60] (n3) to (-4, -3.2);
					\draw[->, thick, dashed, black!60] (n3) to (-3.6, -3.2);
					\draw[->, thick, dashed, black!60] (n3) to (-4.4, -3.2);
					
					\draw[->, thick, dashed, black!60] (n5) to (4, -3.2);
					\draw[->, thick, dashed, black!60] (n5) to (3.6, -3.2);
					\draw[->, thick, dashed, black!60] (n5) to (4.4, -3.2);
				\end{tikzpicture}
			}
			\vspace{1mm}
			\subcaption{}\label{fig:sub:apa-apa}
		\end{subfigure}
	\end{minipage}%
	\begin{minipage}[b]{0.5\linewidth}
		\begin{subfigure}{\linewidth}
			\centering
			\small
			\scalebox{0.95}{
				\begin{tikzpicture}
					\node[rectangle, draw, thick, label=below:{\small \textbf{0}}] at (0,0) (n0) {$q_0'$};
					
					\node[rectangle, draw, thick, label=below:{\small \textbf{0}}] at (3,0) (n1) {$q_1'$};
					
					\node[rectangle, fill=myred!13, draw, thick, label=below:{\small \textbf{1}}] at (5.5,0) (n2) {$q_2'$};
					
					\node[rectangle, fill=mygreen!13, draw, thick, label=below:{\small \textbf{0}}] at (1.5,0.8) (n4) {$q_3'$};
					
					\draw[->, thick] (-0.65, 0) to (n0);
					
					\draw[->, thick] (n0) to node[below,sloped,align=left] {\small $[\pi \mapsto s_0]$,\\\small$[\pi \mapsto s_1]$} (n1);
					\draw[->, thick] (n1) to node[below,align=left] {\small $[\pi \mapsto s_0]$,\\\small$[\pi \mapsto s_1]$} (n2);
					
					\draw[->, thick] (n0) |- node[above,sloped,align=center,xshift=14pt] {\small $[\pi \!\mapsto \!s_2]$} (n4);
					\draw[->, thick] (n1) |- node[above,sloped,align=center,xshift=-14pt] {\small $[\pi \!\mapsto\! s_2]$} (n4);
					
					\draw[->, thick] (n4) to[loop above] node[above] {\small $\top$} (n4);
					
					\draw[->, thick] (n2) to[loop above] node[above] {\small $\top$} (n2);
					
				\end{tikzpicture}
			}
			\vspace{-1mm}
			\subcaption{}\label{fig:sub:apa-res-dpa}
		\end{subfigure}\\[3mm]
		\begin{subfigure}{\linewidth}
			\centering
			\small
			\scalebox{0.95}{
				\begin{tikzpicture}
					\draw[draw=mygreen, very thick, dashed, fill=mygreen!13] (1.4,-3.4) -- (2.9, -3.4) -- (2.9, -4.5) -- (1.4, -4.5) -- (1.4, -3.4);
					
					\draw[draw=myred, very thick, dotted, fill=myred!13] (-0.6,-3.4) -- (0.9, -3.4) -- (0.9, -4.5) -- (-0.6, -4.5) -- (-0.6, -3.4);
					
					\draw[draw=myred, very thick, dotted, fill=myred!13] (-1.4,-3.4) -- (-2.6, -3.4) -- (-2.6, -4.5) -- (-1.4, -4.5) -- (-1.4, -3.4);
					
					\node[rectangle, draw, thick, label=above:{\small \textbf{0}}] at (0,0) (n0) {$q'_0, s_0$};
					
					\node[rectangle, draw, thick, label=left:{\small \textbf{0}}] at (-2,-2.7) (n1) {$q'_1, s_0$};
					
					\node[rectangle, draw, thick, label=right:{\small \textbf{0}}] at (0,-2.7) (n2) {$q'_1, s_1$};
					
					\node[rectangle, draw, thick, label=right:{\small \textbf{0}}] at (2,-2.7) (n3) {$q'_1, s_2$};

					\node[rectangle, draw, thick, label=right:{\small \textbf{1}}] at (0,-3.7) (n4) {$q'_2, s_2$};
					
					\node[rectangle, draw, thick, label=right:{\small \textbf{0}}] at (2,-3.7) (n5) {$q'_3, s_0$};
					
					\draw[->, thick] (-0.85, 0) to (n0);
					
					\node[circle, draw, inner sep=1pt] at (0,-1.2) (d0) {\tiny$\lor$};
					
					\node[circle, draw,inner sep=1pt] at (-2,-2.0) (d1) {\tiny$\land$};
					\node[circle, draw,inner sep=1pt] at (0,-2.0) (d2) {\tiny$\land$};
					
					\node[circle, draw,inner sep=1pt] at (2,-2.0) (d3) {\tiny$\land$};
					
					\node[align=left,black!40] at (3,-1.5) (l1) {\scriptsize $[\agent{\mathit{sched}} \! \mapsto\! \texttt{g}, \agent{\mathit{W1}}\! \mapsto\! \texttt{r},\agent{\mathit{W2}}\! \mapsto\! \texttt{r}]$};
					\draw[-,dotted, thick, black!40] (l1) -- (d3);
					
					\node[align=left,black!40] at (2.5,-0.5) (l2) {\scriptsize $[\agent{\mathit{sched}} \!\mapsto \!\texttt{g}, \agent{\mathit{W1}} \!\mapsto \!\texttt{r},\agent{\mathit{W2}} \!\mapsto \!\texttt{nr}]$,\\\scriptsize $[\agent{\mathit{sched}} \!\mapsto \!\texttt{g}, \agent{\mathit{W1}} \!\mapsto \!\texttt{nr},\agent{\mathit{W2}} \!\mapsto \!\texttt{r}]$};
					\draw[-,dotted, thick, black!40] (l2) -- (d2);
					
					\node[align=left,black!40] at (-2.2,-0.7) (l3) {\scriptsize $[\agent{\mathit{sched}} \!\mapsto \!\texttt{ng}, \agent{\mathit{W1}} \!\mapsto \!\_,\agent{\mathit{W2}} \!\mapsto \!\_]$,\\\scriptsize $[\agent{\mathit{sched}} \!\mapsto \!\texttt{g}, \agent{\mathit{W1}} \!\mapsto \!\texttt{nr},\agent{\mathit{W2}} \!\mapsto \!\texttt{nr}]$};
					\draw[-,dotted, thick, black!40] (l3) -- (d1);
					
					\draw[->, thick] (n0) to node[right,align=left] {\small $\emptyset$} (d0);
					
					\draw[->, thick] (d0) to (d1);
					\draw[->, thick] (d0) to (d2);
					\draw[->, thick] (d0) to (d3);
					
					\draw[->, thick] (d1) -- (n1);
					\draw[->, thick] (d2) -- (n2);
					\draw[->, thick] (d3) -- (n3);
					
					\node[circle, draw, inner sep=1pt] at (-2,-3.7) (d4) {\tiny$\lor$};

					\draw[->, thick] (n2) to node[right,align=left] {\small $\emptyset$} (n4);
					
					\draw[->, thick] (n3) to node[right,align=left] {\small $\emptyset$} (n5);
					
					\draw[->, thick] (n1) to node[right,align=left,yshift=0.6mm] {\small $\emptyset$} (d4);
					
					\draw[->, thick, dashed, black!60] (n4) to (0,-4.4);
					\draw[->, thick, dashed, black!60] (n4) to (-0.4,-4.4);
					\draw[->, thick, dashed, black!60] (n4) to (0.4,-4.4);
					
					\draw[->, thick, dashed, black!60] (n5) to (2,-4.4);
					\draw[->, thick, dashed, black!60] (n5) to (1.6,-4.4);
					\draw[->, thick, dashed, black!60] (n5) to (2.4,-4.4);
					
					\draw[->, thick, dashed, black!60] (d4) to (-2,-4.4);
					\draw[->, thick, dashed, black!60] (d4) to (-1.6,-4.4);
					\draw[->, thick, dashed, black!60] (d4) to (-2.4,-4.4);		
				\end{tikzpicture}
			}
			\subcaption{}\label{fig:running-cont-final}
		\end{subfigure}%
	\end{minipage}
	
	\caption{Illustration of our model-checking algorithm on \Cref{ex:cgs}.
		In \Cref{fig:sub:apa-dpa}, we depict a DPA over alphabet $\{\pi, \pi'\} \to \{s_0, s_1, s_2\}$ for the body $(\neg \mathit{w}_{\pi'}) \ltlU (\neg \mathit{w}_{\pi'} \land \mathit{w}_{\pi} )$.
		In \Cref{fig:sub:apa-apa}, we sketch the APA over alphabet $\{\pi\} \to \{s_0, s_1, s_2\}$ constructed using \Cref{theo:construction} that is $(\calG, s_0)$-equivalent to subformula $\llbracket \agent{\mathit{sched}}, \agent{\mathit{W1}} \rrbracket \, \pi'\ldot (\neg \mathit{w}_{\pi'}) \ltlU (\neg \mathit{w}_{\pi'} \land \mathit{w}_{\pi})$. 
		In \Cref{fig:sub:apa-res-dpa}, we depict a DPA that is equivalent to the APA in \Cref{fig:sub:apa-apa}.
		Lastly, in \Cref{fig:running-cont-final}, we sketch the APA (over singleton alphabet $\emptyset \to \{s_0, s_1, s_2\}$) constructed using \Cref{theo:construction} that is $(\calG, s_0)$-equivalent to $\llangle \agent{\mathit{sched}}, \agent{\mathit{W1}}, \agent{\mathit{W2}} \rrangle \, \pi \ldot\llbracket \agent{\mathit{sched}}, \agent{\mathit{W1}} \rrbracket \, \pi'\ldot (\neg \mathit{w}_{\pi'}) \ltlU (\neg \mathit{w}_{\pi'} \land \mathit{w}_{\pi})$. 
}
\end{figure*}

\begin{restatable}{theorem}{main}\label{theo:construction}
	Assume that $\varphi = \lQ A \rQ_\xi \, \pi\ldot \varphi'$ and let $\calA_{\varphi'}$ be an APA over alphabet $(V \cup \{\pi\} \to S)$ that is $(\calG, \dot{s})$-equivalent to $\varphi'$. 
	We can effectively construct an APA $\calA_{\varphi}$ over alphabet $V \to S$ that is $(\calG, \dot{s})$-equivalent to $\varphi$. 
	The size of $\calA_{\varphi}$ is at most double exponential in the size of $\calA_{\varphi'}$.
\end{restatable}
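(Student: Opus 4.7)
The plan is to combine the determinization of $\calA_{\varphi'}$ with a product construction against $\calG$, using the alternation of an APA to encode the strategic quantifier directly. First I would apply \Cref{prop:toDPA} to turn $\calA_{\varphi'}$ into an equivalent DPA $\calB = (Q', q_0', \delta', c')$ of doubly exponential size over alphabet $(V \cup \{\pi\}) \to S$; determinism is essential here, since when we eliminate $\pi$ by letting the $\calG$-dynamics synthesise its states on the fly, a \emph{single} $\calB$-state must suffice to summarise $\varphi'$'s history on $\pi$ inside each product state, rather than requiring nondeterministic guessing.

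I would then set $\calA_\varphi := (Q' \times S, (q_0', \dot{s}), \delta'', c'')$ over alphabet $V \to S$ with $c''(q', s) := c'(q')$. Writing $\mathit{Act}^\xi_A := \{\bA : A \to \moves \mid \forall (\agent{i}, \agent{j}) \in \xi \cap (A \times A)\ldot \bA(\agent{i}) = \bA(\agent{j})\}$ for the action vectors respecting the sharing constraint on $A$ (and analogously $\mathit{Act}^\xi_{\overline{A}}$), and letting $q'' := \delta'(q', l[\pi \mapsto s])$, I would define, for $\varphi = \llangle A \rrangle_\xi \pi \ldot \varphi'$,
\begin{align*}
\delta''\big((q', s), l\big) := \bigvee_{\bA_A \in \mathit{Act}^\xi_A} \bigwedge_{\bA_{\overline{A}} \in \mathit{Act}^\xi_{\overline{A}}} \big(q'', \kappa(s, \bA_A \oplus \bA_{\overline{A}}) \big),
\end{align*}
and for $\varphi = \llbracket A \rrbracket_\xi \pi \ldot \varphi'$ by swapping $\bigvee$ and $\bigwedge$. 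The existential branches carry $A$'s single-step action choice (subject to $\xi$), the universal branches carry $\overline{A}$'s response, and $\calB$ monitors $\varphi'$ along the freshly constructed $\pi$-path.

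The heart of the proof is then to show $(\calG, \dot{s})$-equivalence: for every $\Pi : V \to S^\omega$, I would establish a correspondence between accepting run trees of $\calA_\varphi$ on $\mathit{zip}(\Pi)$ and witnessing $A$-strategies in $\calG$. Given an accepting run tree, the local existential choices assemble pointwise into a strategy $\bF \in \mathit{shr}_\calG(A, \xi)$ (sharing follows because each local choice lies in $\mathit{Act}^\xi_A$); for any $\bF' \in \mathit{shr}_\calG(\overline{A}, \xi)$, the induced branch traces exactly the $\calB$-run on $\mathit{zip}(\Pi[\pi \mapsto \play_\calG(\dot{s}, \bF \oplus \bF')])$, so the parity condition transfers. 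Conversely, positional determinacy of the parity game underlying the APA lets me turn a witnessing $\bF$ in $\calG$ into an accepting run tree; the $\llbracket A \rrbracket$ case is fully dual. I expect the main obstacle to be this correspondence between history-dependent $\calG$-strategies and node-level choices in the APA run tree: it is precisely here that the determinisation of $\calA_{\varphi'}$ is indispensable, since without a unique successor for $\calB$ the strategic choices of $A$ would become entangled with the nondeterministic guesses of $\calA_{\varphi'}$, breaking the correspondence. The size bound is then immediate: $|Q' \times S|$ is doubly exponential in $|\calA_{\varphi'}|$, while the transition formulas have size bounded by a constant depending only on $\calG$.
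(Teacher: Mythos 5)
Your proposal matches the paper's proof essentially step for step: determinize $\calA_{\varphi'}$ via \Cref{prop:toDPA}, form the product $Q \times S$ with the colors inherited from the DPA, encode the quantifier $\lQ A \rQ_\xi$ by a disjunction over $\xi$-respecting action vectors for $A$ followed by a conjunction over those for $\overline{A}$ (swapped for $\llbracket A \rrbracket_\xi$), and prove $(\calG,\dot{s})$-equivalence by translating between accepting run trees and witnessing strategy vectors in $\mathit{shr}_\calG(A,\xi)$, with the same size accounting. The only cosmetic difference is your appeal to positional determinacy for the strategy-to-run-tree direction, where the paper simply unfolds the given strategy $\bF$ into a run tree directly; this does not change the substance of the argument.
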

\begin{proof}
	Let $\calA^\mathit{det}_{\varphi'} = (Q, q_0, \delta, c)$ be a DPA equivalent to $\calA_{\varphi'}$ (cf.~\Cref{prop:toDPA}).
	We define $\calA_\varphi = (Q \times S, (q_0, \dot{s}), \delta', c')$ where $c'(q, s) := c(q)$ and $\delta'$ is defined as follows: 
	If $\varphi = \llangle A \rrangle_\xi \, \pi\ldot \varphi'$ we define $\delta'\big((q, s), l\big)$ for $l : V \to S$ as
	\begin{align*}
		 &\bigvee_{\substack{\bA : A \to \moves \\\forall \agent{i}, \agent{j} \in A\ldot (\agent{i}, \agent{j}) \in \xi  \\\Rightarrow\bA(\agent{i}) = \bA(\agent{j})}} \;\; \bigwedge_{\substack{\bA' : \overline{A} \to \moves \\\forall \agent{i}, \agent{j} \in \overline{A}\ldot (\agent{i}, \agent{j}) \in \xi  \\\Rightarrow\bA'(\agent{i}) = \bA'(\agent{j})}} \!\!\! \Big(\delta\big(q, l[\pi \mapsto s]\big), \kappa\big(s, \bA \oplus \bA'\big)\Big)
	\end{align*}
	Conversely, if $\varphi = \llbracket A \rrbracket_\xi \, \pi\ldot \varphi'$ we define $\delta'\big((q, s), l\big)$ as
	\begin{align*}
		&\bigwedge_{\substack{\bA : A \to \moves \\\forall \agent{i}, \agent{j} \in A\ldot (\agent{i}, \agent{j}) \in \xi  \\\Rightarrow\bA(\agent{i}) = \bA(\agent{j})}} \;\; \bigvee_{\substack{\bA' : \overline{A} \to \moves \\\forall \agent{i}, \agent{j} \in \overline{A}\ldot (\agent{i}, \agent{j}) \in \xi  \\\Rightarrow\bA'(\agent{i}) = \bA'(\agent{j})}} \!\!\! \Big(\delta\big(q, l[\pi \mapsto s]\big), \kappa\big(s, \bA \oplus \bA'\big)\Big)
	\end{align*}
	The intuition behind our construction is that we can simulate the strategic quantification at the level of states, similar to what is possible in \HyperATLS{} \cite{BeutnerF21,BeutnerF23}.
	Let us take $\varphi = \llangle A \rrangle_\xi \, \pi\ldot \varphi'$ as an example. The desired automaton $\calA_{\varphi}$ should accept a word $u \in (V \to S)^\omega$ iff there exists a strategy vector $\bF : A \to \strats{\calG}$ that respects $\xi$ and for all paths $\pi$ compatible with $\bF$, the extended zipped path assignment (a word in $(V \cup \{\pi\} \to S)^\omega$) is accepted by $\calA_{\varphi'}^\mathit{det}$.
	In our constructions, we track the current state $q$ of $\calA_{\varphi'}^\mathit{det}$ and simulate $\calG$ by keeping track of the current state $s$. 
	When in state $(q, s)$, we update the automaton state according to the transition function of $\calA^\mathit{det}_{\varphi'}$ using the current state $s$ for path variable $\pi$. 
	To update the state of $\calG$, we simulate the strategic behavior: \textbf{(1)} we disjunctive fix actions for each agent in $A$ via a function $\bA$ and ensure that all sharing constraints hold; \textbf{(2)} we conjunctively choose actions for $\overline{A}$ as a function $\bA'$ (subject to the sharing constraints); and \textbf{(3)} we update the system state to $\kappa(s, \bA \oplus \bA')$.
	
	Arguing that $\calA_{\varphi}$ is $(\calG, \dot{s})$-equivalent to $\varphi$ is based on the determinacy of the underlying game. 
	As we work in a setting of complete information (where all agents observe the state of the CGS), we can replace the existential quantification over strategies for $A$ (as in the semantics of $\llangle A \rrangle_\xi \, \pi\ldot \varphi'$) with existential quantification over actions for $A$ in each step (as used in the disjunctive choice in the definition of $\calA_{\varphi}$).
	We give a formal proof in \ifFull{\Cref{app:proof-main}}{\cite{fullVersion}}.
	
	The size of $\calA_{{\varphi}}$ is linear in the size of $\calG$ and $\calA_{{\varphi'}}^\mathit{det}$ (which itself is doubly exponential in $\calA_{{\varphi'}}$, cf.~\Cref{prop:toDPA}).
\end{proof}

For a formula $\varphi = \lQ A \rQ_\xi \, \pi \ldot \varphi'$ and APA $\calA$ that is $(\calG, \dot{s})$-equivalent to $\varphi'$, let \lstinline[style=default, language=custom-lang]|product($\calG$, $\dot{s}$, $\calA$, $\lQ A \rQ_\xi \, \pi$)| be the APA that is $(\calG, \dot{s})$-equivalent to $\varphi$ constructed using \Cref{theo:construction}. 
In \Cref{alg:main-alg}, we start with an APA that is equivalent to $\psi$ (line \ref{line:ltlToApa}), and apply \lstinline[style=default, language=custom-lang]|product| to iteratively compute $(\calG, \dot{s})$-equivalent automata for subformulas $\lQ A_j \rQ_{\xi_j} \pi_j \ldots \lQ A_n \rQ_{\xi_n} \pi_n\ldot \psi$ for $j$ from $n$ to $1$  (\cref{line:product}).
After the loop, we are left with an APA $\calA$ over singleton alphabet $(\emptyset \to S)$ that is $(\calG, \dot{s})$-equivalent to $\varphi$; we can thus decide if $\dot{s}, \emptyset \models_\calG \varphi$ by checking if $\mathit{zip}(\emptyset) \in \calL(\calA)$ (\cref{line:emptinessCheckIf}).

\begin{restatable}{proposition}{correctProp}\label{prop:correctProp}
	For every CGS $\calG = (S, s_0, \moves, \kappa, L)$ and closed \HyperATLSS{} formula $\varphi$, we have 
	\begin{align*}
		\text{\lstinline[style=default, language=custom-lang]|modelCheck($\calG$,$\,$$\varphi$)|} = \big\{s \in S \mid s, \emptyset \models_\calG \varphi \big\}.
	\end{align*}
\end{restatable}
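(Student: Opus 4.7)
The proof plan is a structural induction on the \HyperATLSS{} formula $\varphi$, ordered by the nesting depth of state quantifiers. The outer skeleton of the argument mirrors the outer loop of \lstinline[style=default, language=custom-lang]|modelCheck|: first handle nested state formulas by the induction hypothesis, then justify the iterative APA construction that eliminates the top-level quantifier prefix.

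For the inductive step, write $\varphi = \lQ A_1 \rQ_{\xi_1} \pi_1 \ldots \lQ A_n \rQ_{\xi_n} \pi_n\ldot \psi$. I would first argue that replacing nested state formulas with fresh propositions is semantics-preserving. Concretely, let $\varphi'$ be a maximal proper state subformula of $\psi$ and let $\calG'$ be $\calG$ with $L$ modified to label exactly the states in $S_{\varphi'} = \{s \mid s,\emptyset \models_\calG \varphi'\}$ with $p_{\varphi'}$ (which exists by the induction hypothesis applied to \cref{line:rec}). Because $\varphi'$ is closed and \HyperATLSS{} semantics evaluate a nested state formula $\varphi'_\pi$ purely on the first state of $\Pi(\pi)$ against the \emph{empty} path assignment, a straightforward induction on path/state formula structure shows that for every path assignment $\Pi$, $\Pi \models_\calG \psi \iff \Pi \models_{\calG'} \psi[\varphi'_{\pi_k}/(p_{\varphi'})_{\pi_k}]_{k=1}^n$; iterating over all nested subformulas yields a CGS $\calG^*$ and body $\psi^*$ with no nested state formulas and $s,\Pi \models_\calG \varphi \iff s,\Pi \models_{\calG^*} \lQ A_1\rQ_{\xi_1}\pi_1\ldots\lQ A_n\rQ_{\xi_n}\pi_n\ldot \psi^*$ for all $s,\Pi$. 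The subtle point is that atomic propositions $p_{\varphi'}$ are fresh and appear only under $(\cdot)_{\pi_k}$, so the labeling change is invisible to the original formula.

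Next, I would establish the loop invariant annotated at \cref{line:after}: after iteration $j$, the automaton $\calA$ is $(\calG^*, \dot{s})$-equivalent to $\lQ A_j\rQ_{\xi_j} \pi_j\ldots \lQ A_n\rQ_{\xi_n}\pi_n\ldot \psi^*$. The base case ($j=n{+}1$, before the loop) requires that \lstinline[style=default, language=custom-lang]|LTLtoAPA($\psi^*$)| produces an APA over alphabet $\{\pi_1,\ldots,\pi_n\}\to S$ that is $(\calG^*,\dot{s})$-equivalent to $\psi^*$; this is immediate because $\psi^*$ is an \LTL{}-like path formula over indexed propositions $a_{\pi_k}$, and the satisfaction $\Pi \models_{\calG^*} \psi^*$ depends only on $\mathit{zip}(\Pi)$ read through the labeling $L$ (and is independent of $\dot{s}$). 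The inductive step follows directly from \Cref{theo:construction}: each call \lstinline[style=default, language=custom-lang]|product($\calG^*$,$\dot{s}$,$\calA$,$\lQ A_j\rQ_{\xi_j}\pi_j$)| is exactly the construction of that theorem and preserves $(\calG^*,\dot{s})$-equivalence while absorbing one more quantifier.

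Finally, at termination, $\calA$ is an APA over the singleton alphabet $\emptyset \to S$ that is $(\calG^*,\dot{s})$-equivalent to $\varphi^* = \lQ A_1\rQ_{\xi_1}\pi_1\ldots\lQ A_n\rQ_{\xi_n}\pi_n\ldot\psi^*$. Since the zipped empty assignment $\mathit{zip}(\emptyset)$ is the unique word over this alphabet, \Cref{def:equiv} gives $\mathit{zip}(\emptyset) \in \calL(\calA) \iff \dot{s},\emptyset \models_{\calG^*} \varphi^* \iff \dot{s},\emptyset \models_\calG \varphi$, where the last equivalence uses the substitution lemma above. Hence $\dot{s}$ is added to $\mathit{Sol}$ precisely when $\dot{s},\emptyset \models_\calG \varphi$, as required. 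The main obstacle I expect is the bookkeeping in the substitution lemma — verifying that the state-formula/path-formula mutual recursion of the \HyperATLSS{} semantics interacts cleanly with the per-quantifier labeling updates, especially when several nested subformulas are replaced in sequence and path variables are shared across them; the rest is essentially a direct appeal to \Cref{theo:construction} and \Cref{prop:toDPA}.
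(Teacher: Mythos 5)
Your proposal is correct and follows essentially the same route as the paper's proof: establish that the nested-state-formula preprocessing (fresh propositions $p_{\varphi'}$) preserves satisfaction, maintain the loop invariant that $\calA$ is $(\calG,\dot{s})$-equivalent to the suffix $\lQ A_j\rQ_{\xi_j}\pi_j\ldots\lQ A_n\rQ_{\xi_n}\pi_n\ldot\psi$ via \Cref{theo:construction}, and conclude by checking $\mathit{zip}(\emptyset)\in\calL(\calA)$. Your treatment of the substitution step is merely more explicit than the paper's ``easy to see'' remark, but the decomposition and key lemmas are the same.
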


\paragraph{Complexity.}

Each application of \lstinline[style=default, language=custom-lang]|product| increases the size of $\calA$ by (in the worst case) two exponents.
Checking a \HyperATLSS{} formula with $n$ nested quantifiers is thus in $2n$-\EXPTIME{}, and MC for general formulas is non-elementary.
As \HyperATLSS{} subsumes \HyperATLS{}, we get a matching non-elementary \emph{hardness} \cite{BeutnerF23}.
\HyperATLSS{} is thus more expressive (and also much harder to model-check) than \ATLS{}.
We stress that the non-elementary complexity of \HyperATLSS{} stems from its ability to quantify over \emph{arbitrarily} many paths. 
In most properties of interest, we quantify over few paths (cf.~\Cref{sec:eval}), which results in a much lower (elementary) complexity.
In particular, if we apply \Cref{alg:main-alg} to an \ATLS{}-equivalent formula (cf.~\Cref{prop:to-hyperatlss}), we deal with a \emph{single} nested quantifier ($n = 1$) and thus match the $2$-\EXPTIME{} MC complexity known for \ATLS{} \cite{AlurHK02}.

\subsection{Model Checking the Running Example}\label{sec:sub:running}

We illustrate our MC construction on the formula from \Cref{ex:cgs}. 
In a first step, we translate the body $(\neg \mathit{w}_{\pi'}) \ltlU (\neg \mathit{w}_{\pi'} \land \mathit{w}_{\pi})$ to a DPA over alphabet $(\{\pi, \pi'\} \! \to \!\{s_0, s_1, s_2\})$, depicted in \Cref{fig:sub:apa-dpa}. 
Afterward, we can follow the construction from \Cref{theo:construction} to obtain an APA over $(\{\pi\} \to \{s_0, s_1, s_2\})$ that is $(\calG, s_0)$-equivalent to subformula $\llbracket \agent{\mathit{sched}}, \agent{\mathit{W1}}\rrbracket \pi'\ldot (\neg \mathit{w}_{\pi'}) \ltlU (\neg \mathit{w}_{\pi'} \land \mathit{w}_{\pi})$. 
We depict a sketch in \Cref{fig:sub:apa-apa}. 
We start in state $(q_0, s_0)$. 
When reading letter $[\pi \!\mapsto\! s_2]$, we update the automaton state to $q_1$, so -- as $q_1$ is an accepting sink -- every run from such states is accepting.
To aid readability, we stop exploration as soon as the automaton state equals $q_1$ or $q_2$ and mark them with a green (dashed border) and red (dotted border) box to represent acceptance and rejection, respectively. 
When reading letters $[\pi \!\mapsto \!s_0]$ or $[\pi \!\mapsto \!s_1]$, we remain in automaton state $q_0$.
However, to update the state of the CGS, we need to simulate the strategic behavior within the CGS. 
As we quantify \emph{universally} over strategies for $\{\agent{\mathit{sched}}, \agent{\mathit{W1}}\}$, we conjunctively consider all possible action vectors $\{\agent{\mathit{sched}}, \agent{\mathit{W1}}\} \to \moves$. 
For each such action vector, we can then disjunctively choose an action for $\agent{\mathit{W2}}$. 
In our visualization in	\Cref{fig:sub:apa-apa}, we use decision nodes (as in \Cref{ex:apa}); 
for the reader's convenience, we label each conjunctive choice with the corresponding partial action vector.
For example, if we conjunctively pick the (partial) action vector $[\agent{\mathit{sched}} \!\mapsto \!\texttt{g}, \agent{\mathit{W1}} \!\mapsto \!\texttt{r}]$, agent $\agent{\mathit{W2}}$ can (disjunctively) move to either $(q_0, s_1)$ or $(q_0, s_2)$.

To better understand of the APA we have just constructed, we can translate it to some equivalent DPA, depicted in \Cref{fig:sub:apa-res-dpa}.
For this DPA, we can see that a path assignment is accepted iff $s_2$ (i.e., the unique state where AP $w$ holds) occurs within the first two steps on $\pi$.
This exactly matches the intuition of $(\calG, s_0)$-equivalence:
A path assignment $\Pi : \{\pi\} \to \{s_0, s_1, s_2\}^\omega$ satisfies $s_0, \Pi \models_\calG \llbracket \agent{\mathit{sched}}, \agent{\mathit{W1}} \rrbracket \, \pi'\ldot (\neg \mathit{w}_{\pi'}) \ltlU (\neg \mathit{w}_{\pi'} \land \mathit{w}_{\pi})$	iff $s_2$ occurs within the first two steps on $\pi$.  
If $s_2$ does \emph{not} hold on in the first two steps, $\{\agent{\mathit{sched}}, \agent{\mathit{W1}}\}$ can ensure that $s_2$ holds in the third step on $\pi'$ and thus violate the property. 
	
We can use the DPA in \Cref{fig:sub:apa-res-dpa} and, again, apply \Cref{theo:construction} to the outermost quantifier $\llangle\agent{\mathit{sched}}, \agent{\mathit{W1}}, \agent{\mathit{W2}} \rrangle \, \pi$, resulting in the APA over singleton alphabet $(\emptyset \! \to \! \{s_0, s_1, s_2\})$ sketched in \Cref{fig:running-cont-final}. 
Here, we \emph{disjunctively} pick an action vector $\{\agent{\mathit{sched}}, \agent{\mathit{W1}}, \agent{\mathit{W2}}\} \to \moves$ (annotated at each decision node).
As there are no agents in $\overline{\{\agent{\mathit{sched}}, \agent{\mathit{W1}}, \agent{\mathit{W2}}\}}$, each action vector yields a unique successor state.
It is easy to see that this APA accepts $\mathit{zip}(\emptyset) = \emptyset^\omega$, proving $\calG \models \varphi$.

\section{Implementation and Experiments}\label{sec:eval}

We have implemented our algorithm in a tool we call \tool{}.
As input, our tool reads MASs in the form of ISPL models \cite{LomuscioQR09}.
For automata operations (in particular, the translation from alternating to deterministic automata), we use \texttt{spot}; an actively-maintained automata library \cite{Duret-LutzRCRAS22}. 
To check APAs over the \emph{singleton} alphabet ($\emptyset \to S$) for emptiness, we use the parity-game solver \texttt{oink} \cite{Dijk18}.
All results were obtained on a 3.60GHz Xeon\textregistered{} CPU (E3-1271) with 32GB of memory running Ubuntu 20.04.
  
\subsection{Model Checking \ATLS{}}\label{sec:sub:atls}

\begin{table}
	\centering
	\small
\begin{tabular}{lllll}
	\toprule
	$\boldsymbol{n}$ & $\boldsymbol{|S|}$ & $\boldsymbol{|S_\mathit{reach}|}$ &  $\boldsymbol{t}_{\mcmassl{}}$& $\boldsymbol{t}_\tool{}$  \\
	\midrule
	2 &72 & 9 & \textbf{0.11} & 0.41  \\
	3 &432& 21 & 6.64 & \textbf{2.06}  \\
	4 &2592& 49 & 322.7 & \textbf{24.3}   \\
	5 &15552& 113& TO & \textbf{347.1} \\
	\bottomrule
\end{tabular}

\caption{We compare \tool{} and \mcmassl{}. We give the size of the system ($|S|$), the size of the reachable fragment ($|S_\mathit{reach}|$), and the verification times in seconds. The timeout (TO) is set to 1h.
}\label{tab:results-sl}
\end{table}

In our first experiment, we want to compare the performance of \tool{} against existing tools for strategic properties.
This requires us to consider \emph{non-hyper} properties in the form of \ATLS{} specification (as no existing tool can handle hyperproperties).
Concretely, we compare with  \mcmassl{}, a solver for a fragment of strategy logic \cite{CermakLM15}.
We use the same benchmark family used by \citet{CermakLM15}, describing a parametric scheduling problem consisting of agents $\agents = \{ \agent{\mathit{sched}}, \agent{y_1}, \ldots, \agent{y_n}\}$ for $n \in \nat$.
We check the following \HyperATLSS{} formula
\begin{align*}
	\llangle \agent{\mathit{sched}} \rrangle \pi \ldot \bigwedge_{i=1}^n \ltlG \Big( \langle \mathit{wt}, i\rangle_\pi \to \ltlF \neg \langle \mathit{wt}, i\rangle_\pi \Big)
\end{align*}
The formula states that whenever agent $\agent{y_i}$ waits (modeled by AP $\langle \mathit{wt}, i\rangle$), it will eventually not wait anymore, i.e., the scheduler has a strategy that avoids starvation of all agents.
This formula is equivalent to the strategy logic specification used by \citet{CermakLM15}.

We check the \HyperATLSS{} formulas (and the equivalent strategy logic specifications) with \tool{} and \mcmassl{} for varying values of $n$.
We use the ``optimized'' algorithm in \mcmassl{} that decomposes the formula as much as possible \cite[\S4]{CermakLM15}.
The results are given in \Cref{tab:results-sl}. 
We observe that \tool{} performs much faster than \mcmassl{}, which we largely accredit to the very efficient backend solvers in \texttt{spot} and \texttt{oink}.

\subsection{Model Checking Hyperproperties}\label{sec:sub:eval-hyper}

\begin{table*}[!t]
	\centering
	\small
	\begin{tabular}{l@{\hspace{5mm}}c@{\hspace{3mm}}c@{\hspace{12mm}}c@{\hspace{3mm}}c@{\hspace{12mm}}c@{\hspace{3mm}}c@{\hspace{12mm}}c@{\hspace{3mm}}c@{\hspace{12mm}}c@{\hspace{4mm}}c}
		\toprule
		& \multicolumn{2}{@{\hspace{-10mm}}c}{\textbf{Optimality I}} & \multicolumn{2}{@{\hspace{-10mm}}c}{\textbf{Optimality II}} & \multicolumn{2}{@{\hspace{-10mm}}c}{\textbf{Optimality III}} & \multicolumn{2}{@{\hspace{-10mm}}c}{\textbf{OD}} & \multicolumn{2}{@{\hspace{-2mm}}c}{\textbf{GE}}\\
		\cmidrule[1pt](l{-2mm}r{10mm}){2-3}
		\cmidrule[1pt](l{-2mm}r{10mm}){4-5}
		\cmidrule[1pt](l{-2mm}r{10mm}){6-7}
		\cmidrule[1pt](l{-2mm}r{10mm}){8-9}
		\cmidrule[1pt](l{-2mm}){10-11}
		\textbf{ISPL Model}  & $\boldsymbol{t}_\mathit{avg}$ & $\boldsymbol{t}_\mathit{max}$ & $\boldsymbol{t}_\mathit{avg}$ & $\boldsymbol{t}_\mathit{max}$ & $\boldsymbol{t}_\mathit{avg}$ & $\boldsymbol{t}_\mathit{max}$ & $\boldsymbol{t}_\mathit{avg}$ & $\boldsymbol{t}_\mathit{max}$ & $\boldsymbol{t}_\mathit{avg}$ & $\boldsymbol{t}_\mathit{max}$ \\
		\midrule
		\textsc{bit-transmission}   & $0.38$ & $0.41$  & 0.39 & 0.42  & 0.39 & 0.44  & 0.39 & 0.44  & 0.38 & 0.42 \\
		\textsc{book-store}    & $0.39$ & $0.42$  & 0.40 & 0.47  & 0.40 & 0.44  &0.39 & 0.42 & 0.39 & 0.43\\
		\textsc{card-game}   & $0.38$ & $0.39$  & 0.38 & 0.39  & 0.41 & 0.48  & 0.39 & 0.46 & 0.36 & 0.37\\
		\textsc{dining cryptographers}   & 0.70 & 0.77  & 0.68 &0.85   & 0.70 &0.77  &0.69&0.74  & 0.69 &1.10\\
		\textsc{muddy-children}   & 0.36 & 0.44  & 0.36 & 0.40  & 0.36 & 0.42  & 0.36 & 0.42  & 0.36 & 0.42\\
		\textsc{simple-card-game}   & 0.35 &0.38  & 0.35 &0.37  & 0.35 & 0.38  &0.35&0.38 & 0.34 &0.35\\
		\textsc{software-development}  & - & - & - & - & - & - & -  & - & -   & -\\
		\textsc{strongly-connected}   & 0.35 & 0.41  & 0.34 &0.37  & 0.35 &0.37   & 0.37 & 0.40  & 0.34 & 0.38\\
		\textsc{tianji-horse-racing-game}  & 0.38 &0.45  & 0.37 & 0.39   & 0.37 & 0.40   &0.37 & 0.40 & 0.37 & 0.42\\
		\midrule
		\textsc{scheduler-2}   & 0.47 &0.51  & 0.46 &0.48  & 0.95 &1.35  & 0.47 &0.53  & 0.48 &0.51\\
		\textsc{scheduler-3}   & 2.33 & 2.85  & 2.29 & 2.70  & 9.72 & 20.1  & 2.12& 2.64  & 2.01 & 2.15\\
		\textsc{scheduler-4}   & 29.5 &32.7  & 24.5 & 24.7  & 31.2 & 35.2  & 28.36 & 58.7  & 24.6 & 25.1\\
		\bottomrule
	\end{tabular}

\caption{For each ISPL model \cite{LomuscioQR09}, we display the average time ($t_\mathit{avg}$) and the maximal time ($t_\mathit{max}$) time (in seconds) needed by \tool{} across the 20 random instances sampled from each template.}\label{tab:results-hyper}
\end{table*}

In this section, we challenge \tool{} with interesting hyperproperties.
As underlying MAS models, we use the ISPL models used by \mcmas{} \cite{CermakLM15,LomuscioQR09} and design a range of specification templates that model interesting use cases of \HyperATLSS{}.
We emphasize that not every template models realistic properties in each of the ISPL instances. 
However, our evaluation \textbf{(1)} demonstrates that \HyperATLSS{} can express interesting properties, and \textbf{(2)} empirically shows that \tool{} can check such properties in existing ISPL models (confirming this via further real-world scenarios is interesting future work).
Note that none of the properties falls in the self-composition fragment of \HyperATLS{} \cite{BeutnerF21,BeutnerF23}, which formed the largest fragment supported by previous tools. 

\paragraph{Optimality I.}

As argued in \Cref{sec:intro} and \Cref{ex:cgs}, a particular strength of \HyperATLSS{} is the ability to compare the power of different coalitions. 
For $A, A' \subseteq \agents$ and $\mathit{tgt} \in \ap$ (modeling the target), we check
\begin{align*}
	\llangle A \rrangle \, \pi \ldot \llbracket A' \rrbracket \, \pi'\ldot (\neg \mathit{tgt} _{\pi'}) \ltlU (\neg \mathit{tgt} _{\pi'} \land \mathit{tgt} _{\pi}).
\end{align*} 

\paragraph{Optimality II.}

Using the strategy sharing in \HyperATLSS{}, we can also check if a coalition can achieve the goal equally fast despite using the same strategy for all agents (cf.~\Cref{ex:sharing}). 
For a group of agents $A$ and $\mathit{tgt} \in \ap$, we use \tool{} to check 
\begin{align*}
	\llangle A \rrangle_{\{(\agent{i}, \agent{j}) \mid \agent{i}, \agent{j} \in A\}} \, \pi \ldot \llbracket A \rrbracket \, \pi'\ldot  (\neg \mathit{tgt} _{\pi'}) \ltlU  \mathit{tgt} _{\pi}.
\end{align*} 

\paragraph{Optimality III.}

Likewise, we can express that coalition $A$ can reach a target state at strictly more time points than coalition $A'$ as
\begin{align*}
	\llangle A \rrangle \, \pi \ldot \llbracket A' \rrbracket \, \pi'\ldot &\ltlG (\mathit{tgt} _{\pi'} \to \mathit{tgt} _{\pi}) \land \ltlF (\neg \mathit{tgt} _{\pi'} \land \mathit{tgt} _{\pi}).
\end{align*}

\paragraph{Observational Determinism (OD).}

An important property in the context of security in MASs is \emph{observational determinism} \cite{ZdancewicM03}.
For example, assume we have a system that contains a controller agent $\agent{\mathit{cnt}}$ and an AP $h$ that models a high-security value of the system. 
We want to ensure that the value of $h$ is in control of $\agent{\mathit{cnt}}$, which we can express in \HyperATLSS{} as 
\begin{align*}
	\llangle \agent{\mathit{cnt}} \rrangle \, \pi \ldot \llangle \agent{\mathit{cnt}} \rrangle \, \pi'\ldot \ltlG (h_\pi \leftrightarrow h_{\pi'}).
\end{align*} 
That is, $\agent{\mathit{cnt}}$ has a strategy to construct $\pi$ such that in a second execution, $\agent{\mathit{cnt}}$ can ensure the same sequence of values for $h$ (despite the other agents potentially acting differently).

\paragraph{Good-Enough Synthesis (GE).}

In many scenarios, there does not exist a strategy that wins in all situations. 
Instead, we often look for strategies that are \emph{good-enough} (GE), i.e., strategies that win on every possible input sequence for which a winning output sequence exists \cite{AlmagorK20,AminofGR21,LiTVZ21}. 
We can express the existence of a GE strategy for $A \subseteq \agents$ in \HyperATLSS{} as  
\begin{align*}
	\llangle A \rrangle \, \pi \ldot \llangle \emptyset \rrangle \, \pi'\ldot (\ltlG (\mathit{in}_{\pi} \leftrightarrow \mathit{in}_{\pi'}) \land \ltlF \mathit{tgt}_{\pi'}) \to \ltlF \mathit{tgt}_\pi.
\end{align*} 
That is, $A$ has a strategy for $\pi$ such that if any other (universally quantified) path $\pi'$ agrees on the input $\mathit{in} \in \ap$ with $\pi$ and wins (e.g., reaches a state where $\mathit{tgt} \in \ap$ holds), then $\pi$ must win as well.
Phrased differently, $\pi$ only needs to win, provided some path with the same inputs \emph{can} win.

\paragraph{Results.}

For each ISPL model, we sample 20 random \HyperATLSS{} formulas from each of the templates and display the verification times in \Cref{tab:results-hyper}.
We observe that \tool{} can verify almost all instances within a few seconds.
Even on the challenging scheduler instances, verification of complex hyperproperties is only slightly more expensive than checking non-hyper \ATLS{} formulas (cf.~\Cref{tab:results-sl}).
The only exception is the \textsc{software-development} model; this model consists of roughly 15k states, which is too large for any automata-based representation. 
We stress that already in the non-hyper realm, \mcmassl{} cannot verify (even simple) \ATLS{} and strategy logic specifications in the \textsc{software-development} model and is only applicable to \ATL{} and \CTL{} properties.

\section{Conclusion}

Starting with the seminal work on \ATLS{}, the past decade has seen immense progress in (temporal) logic-based frameworks that provide rigorous and formal guarantees in MASs.
Thus far, most logics focus on a purely path-based view where we reason about the strategic (in)ability of agents.
However, many important properties require reasoning about multiple paths at the same time and investigating scenarios where agents share strategies. 
We have presented \HyperATLSS{}, a powerful logic that bridges this gap. 
Our logic: \textbf{(1)} can express many important properties such as optimality requirements, OD, and GE; \textbf{(2)} admits decidable model checking; and \textbf{(3)} can be checked fully automatically using \tool{}.

For the future, it is interesting to cast even more properties in a \emph{unified} framework using (hyper)logics such as \HyperATLSS{} (similar to what has been done for \ATL{}/\ATLS{}) and explore even more scalable verification approaches for \HyperATLSS{} using, e.g., symbolic techniques.

\section*{Acknowledgments}

This work was supported by the European Research Council (ERC) Grant HYPER (101055412), and by the German Research Foundation (DFG) as part of TRR 248 (389792660).

\bibliography{references}

\iffullversion
\newpage

\appendix

\section{\HyperATLSS{} and \ATLS{}}

\atlsInHyperatlss*
\begin{proof}
	Let $\dot{\pi} \in \pathVars$ be a fixed path variable.
	We recursively translate \ATLS{} path and state formulas as follows:
	Given a \ATLS{} path formula $\psi$ we define the
	\begin{align*}
		\atlsToHyper{a} &:= a_{\dot{\pi}} &  \atlsToHyper{\neg \psi} &:= \neg \atlsToHyper{\psi} & \atlsToHyper{\psi_1 \land \psi_2} &:= \atlsToHyper{\psi_1} \land \atlsToHyper{\psi_2}\\
		\atlsToHyper{\varphi} &:=  \atlsToHyper{\varphi}_{\dot{\pi}}  & \atlsToHyper{\ltlN \psi} &:= \ltlN \atlsToHyper{\psi} &  \atlsToHyper{\psi_1 \ltlU \psi_2} &:= \atlsToHyper{\psi_1} \ltlU \atlsToHyper{\psi_2}
	\end{align*}
	Likewise, we translate \ATLS{} state formulas as follows:
	\begin{align*}
		\atlsToHyper{\llangle A \rrangle \, \psi} &:= \llangle A \rrangle_\emptyset \, \dot{\pi}\ldot \atlsToHyper{\psi}\\
		\atlsToHyper{\llbracket A \rrbracket \, \psi} &:= \llbracket A \rrbracket_\emptyset \, \dot{\pi}\ldot \atlsToHyper{\psi}
	\end{align*}
	That is, whenever \ATLS{} implicitly quantifies over a path, we use the path variable $\dot{\pi}$. 
	Each quantification $\llangle A \rrangle$ and $\llbracket A \rrbracket$  then constructs this path $\dot{\pi}$ under \emph{no} sharing constraints. 
	It is easy to see that $\calG \models_{\text{\ATLS{}}} \varphi$ iff $\calG \models \atlsToHyper{\varphi}$ using a simple induction.
\end{proof}

\section{Alternating Automata}\label{app:apa}

In this section, we formalize the semantics of APAs.

\begin{definition}[Positive Boolean Formula]
	For a set $Q$, we write $\bool^+(Q)$ for the set of all positive boolean formulas over $Q$, i.e., all formulas generated by the following grammar
	\begin{align*}
		\theta := q \mid \theta_1 \land \theta_2 \mid \theta_1 \lor \theta_2
	\end{align*}
	where $q \in Q$.
	Given a subset $X \subseteq Q$ and $\theta \in \bool^+(Q)$, we write $X \models \theta$ if the assignment that maps all states in $X$ to $\top$ and those in $Q \setminus X$ to $\bot$ satisfies $\Psi$. 
	For example $\{q_0, q_1\} \models q_0 \land (q_1 \lor q_2)$. 
\end{definition}

\paragraph{Trees.}

Given the alternating nature of an APA, a run on a word is not an infinite sequence of states (as is usual in non-deterministic automata) but an infinite \emph{tree}. 
Intuitively, each branch in the tree will correspond to a creation of multiple automata runs; as needed for universal branching in the automaton.

We formalize a tree as a subset $T \subseteq \nat^*$ with root $\epsilon \in T$ (where $\epsilon$ denotes the empty sequence).
We refer to elements $\tau \in T$ as nodes and let $|\tau| \in \nat$ be the depth of node $\tau$ (i.e., the length of the sequence).
We define $\mathit{children}_T(\tau) := \{\tau \cdot n \in T \mid n \in \nat\}$ as the set of immediate successors of $\tau$ in $T$.
A $Q$-labeled tree is pair $(T, \ell)$ where $T$ is a tree and $\ell : T \to Q$ is a labeling of nodes with $Q$.  

\paragraph{Run Trees.}

We can now formally define when a $Q$-labeled tree denotes a run of an APA. 

\begin{definition}[Run Tree]
	Given a word $u \in \Sigma^\omega$, a run tree of an APA $\calA = (Q, q_0, \delta, c)$ on $u$ is a $Q$-labeled tree $(T, \ell)$ such that 
	\begin{itemize}
		\item $\ell(\epsilon) = q_0$, i.e., the root of the tree is labeled with the initial state of $\calA$, and
		\item For every $\tau \in T$ with $\ell(\tau) = q$, 
		\begin{align*}
			\big\{ \ell(\tau') \mid \tau' \in \mathit{children}_T(\tau) \big\} \models \delta\big(q, u(|\tau|)\big),
		\end{align*}
		i.e., for every node in the tree (the label of) all its children satisfy the boolean formula over states given by $\delta$.
	\end{itemize}
	The run-tree $(T, \ell)$ is \emph{accepting} if, for every infinite path in the tree, the minimal color that occurs infinitely many times (as given by $c$) is even. 
\end{definition}

We define $\calL(\calA) \subseteq \Sigma^\omega$ as all infinite words on which $\calA$ has an accepting run tree. 
We refer the reader to \citet{Vardi95} for more details on APAs. 

\begin{example}
	We consider the APA $\calA$ from \Cref{ex:apa}.
	For the readers' convenience, we depict it again in \Cref{fig:apa-re}.
	We want to show that $b^\omega \in \calL(\calA)$. 
	In \Cref{fig:apa-run}, we depict a possible accepting run tree of the automaton on $u = b^\omega$. 
	The tree starts in $q_0$. 
	Upon reading letter $b$, the universal branching forces us to restart one run from $q_0$ and one run from either $q_1$ or $q_2$.
	In our tree, we choose $q_2$ for the disjunctive choice, leading to the two children labeled with $q_2$ and $q_0$. 
	From state $q_2$, we then transition to state $q_3$ in the next step and stay there. 
	We can repeat this on every level, i.e., always branch from $q_0$ into both $q_2$ and $q_0$. 
	Note that on each infinite branch of the tree, we visit color $1$ at most once; the smallest color that occurs infinitely many times is thus $0$ on all branches; the run tree is accepting, showing that $b^\omega \in \calL(\calA)$.
\end{example}

Note that in case $\calA$ is non-deterministic -- i.e., every transition formula consists of a disjunction of states -- run trees can always have the form of sequences (i.e., trees where every node has a single child).

\begin{figure}[!t]
	
	\begin{subfigure}{\linewidth}
		\begin{center}
		\begin{tikzpicture}
		
		\node[rectangle, draw, thick, label={[yshift=-1pt]above:{\small \textbf{0}}}] at (-0.7,0) (n0) {$q_0$};
		
		\node[circle, draw, inner sep=1pt] at (1,0) (d0) {\tiny$\land$};
		
		\coordinate[] (c0) at (1, -0.5);
		
		\coordinate[] (c1) at (-0.7, -0.5);
		
		\node[circle, draw, inner sep=1pt] at (2,0) (d1) {\tiny$\lor$};

		\node[rectangle, draw, thick, label={[yshift=1pt]below:{\small \textbf{1}}}] at (3,0.7) (n1) {$q_1$};
		
		\node[rectangle, draw, thick, label={[yshift=-1pt]above:{\small \textbf{1}}}] at (3,-0.7) (n2) {$q_2$};
		
		\node[rectangle, draw, thick,label={[yshift=-1pt]above:{\small \textbf{0}}}] at (4.5,0) (n3) {$q_3$};
		
		\draw[->, thick] (-1.2, 0) to (n0);

		\draw[->, thick] (n0) to node[above,align=center] {\small $a, b, c$} (d0);
		
		\draw[->, thick] (d0) to (d1);
		
		\draw[-, thick] (d0) to (c0);
		\draw[-, thick] (c0) to (c1);
		\draw[->, thick] (c1) to (n0);
		
		\draw[->, thick] (d1) to (n1);
		\draw[->, thick] (d1) to (n2);

		\draw[->, thick] (n1)  to node[above,align=center, sloped] {\small $a$} (n3);
		
		\draw[->, thick] (n2) to node[below,align=center, sloped] {\small $b$} (n3);

		\draw[->, thick, loop left] (n1) to node[left,align=center] {\small $b, c$} (n1);
		\draw[->, thick, loop left] (n2) to node[left,align=center] {\small $a, c$} (n2);

		\draw[->, thick, loop right] (n3) to node[right,align=center] {\small $a, b, c$} (n3);
		
	\end{tikzpicture}
		\end{center}
		
		\subcaption{}\label{fig:apa-re}
	\end{subfigure}\\[5mm]
	\begin{subfigure}{\linewidth}
		
		\begin{center}
			\begin{tikzpicture}
				\node[circle, draw, thick,label=left:{\small \textbf{0}}] at (0, 0) (n0) {$q_0$};
				
				\node[circle, draw, thick,label=left:{\small \textbf{1}}] at (-2, -1.5) (n10) {$q_2$};
				\node[circle, draw, thick,label=left:{\small \textbf{0}}] at (2, -1.5) (n11) {$q_0$};
				
				\node[circle, draw, thick,label=left:{\small \textbf{0}}] at (-2, -3) (n20) {$q_3$};
				\node[circle, draw, thick,label=left:{\small \textbf{1}}] at (0.5, -3) (n21) {$q_2$};
				\node[circle, draw, thick,label=left:{\small \textbf{0}}] at (3.5, -3) (n22) {$q_0$};
				
				\node[circle, draw, thick,label=left:{\small \textbf{0}}] at (-2, -4.5) (n30) {$q_3$};
				\node[circle, draw, thick,label=left:{\small \textbf{0}}] at (0.5, -4.5) (n31) {$q_3$};
				
				\node[circle, draw, thick,label=left:{\small \textbf{1}}] at (2.5, -4.5) (n32) {$q_2$};
				\node[circle, draw, thick,label=left:{\small \textbf{0}}] at (4.5, -4.5) (n33) {$q_0$};
				
				\draw[->, thick] (n0) to (n10);
				\draw[->, thick] (n0) to (n11);
				
				\draw[->, thick] (n10) to (n20);
				\draw[->, thick] (n11) to (n21);
				\draw[->, thick] (n11) to (n22);
				
				\draw[->, thick] (n20) to (n30);
				\draw[->, thick] (n21) to (n31);
				
				\draw[->, thick] (n22) to (n32);
				\draw[->, thick] (n22) to (n33);
				
				\draw[->, thick,dashed] (n30) to (-2, -5.5);
				\draw[->, thick,dashed] (n31) to (0.5, -5.5);
				\draw[->, thick,dashed] (n32) to (2.5, -5.5);
				
				\draw[->, thick,dashed] (n33) to (4, -5.5);
				\draw[->, thick,dashed] (n33) to (5, -5.5);
			\end{tikzpicture}
		\end{center}
		
		\subcaption{}\label{fig:apa-run}
	\end{subfigure}

	\caption{In \Cref{fig:apa-re}, we depict the APA from \Cref{ex:apa}. In \Cref{fig:apa-run}, we sketch a run tree of this APA on the infinite word $b^\omega$.}
\end{figure}

\section{Correctness Proof}\label{app:proof-main}

In this section, we prove \Cref{theo:construction} (in \Cref{sec:sub:correct}) and \Cref{prop:correctProp} (in \Cref{sec:sub:mc}).

\subsection{Proof of \Cref{theo:construction}}\label{sec:sub:correct}

\main*

We focus here on proving the case where $\varphi = \llangle A \rrangle_\xi \, \pi\ldot \varphi'$. 
The proof for $\varphi = \llbracket A \rrbracket_\xi \, \pi\ldot \varphi'$ is analogous.

We use the construction of $\calA_{{\varphi}}$ from the proof sketch in the main body, which we re-iterate for the readers' convenience.
We assume that $\calA_{{\varphi'}}$ is $(\calG, \dot{s})$-equivalent to $\varphi'$ and let $\calA^\mathit{det}_{\varphi'} = (Q, q_0, \delta, c)$ be a DPA equivalent to $\calA_{\varphi'}$ (obtained via \Cref{prop:toDPA}).
We define $\calA_\varphi$ as $\calA_\varphi = (Q \times S, (q_0, \dot{s}), \delta', c')$ where $c'(q, s) := c(q)$ and, for each $l \in V \to S$, we define $\delta'\big((q, s), l\big)$ is defined as 
\begin{align*}
	\bigvee_{\substack{\bA : A \to \moves \\\forall \agent{i}, \agent{j} \in A\ldot (\agent{i}, \agent{j}) \in \xi  \\\Rightarrow\bA(\agent{i}) = \bA(\agent{j})}} \;\; &\bigwedge_{\substack{\bA' : \overline{A} \to \moves \\\forall \agent{i}, \agent{j} \in \overline{A}\ldot (\agent{i}, \agent{j}) \in \xi  \\\Rightarrow\bA'(\agent{i}) = \bA'(\agent{j})}} \big(\delta(q, l[\pi \mapsto s]), \kappa(s, \bA \oplus \bA')\big).
\end{align*}
We now formally prove that this construction fulfills the requirements of \Cref{theo:construction}, i.e., $\calA_\varphi$ is $(\calG, \dot{s})$-equivalent to $\varphi$.
That is, for every path assignment $\Pi : V \to S^\omega$, we have $\mathit{zip}(\Pi) \in \calL(\calA_\varphi)$ iff $\dot{s}, \Pi \models_\calG \varphi$. 
We show both directions of this equivalence as separate lemmas. 

\begin{lemma}\label{lem:dir1}
	For any $\Pi : V \to S^\omega$, if $\mathit{zip}(\Pi) \in \calL(\calA_\varphi)$ then $\dot{s}, \Pi \models_{\calG} \varphi$.
\end{lemma}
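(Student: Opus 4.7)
The plan is to extract from any accepting run tree $(T,\ell)$ of $\calA_\varphi$ on $\mathit{zip}(\Pi)$ a strategy vector $\bF \in \mathit{shr}_\calG(A,\xi)$ witnessing $\dot s,\Pi \models_\calG \llangle A\rrangle_\xi\,\pi\ldot\varphi'$. Since $\calA_{\varphi'}^{\mathit{det}}$ is deterministic, each node $\tau \in T$ at depth $k$ carries a label of the form $(q_\tau,s_\tau) \in Q \times S$, and the transition formula $\delta'((q_\tau,s_\tau),\mathit{zip}(\Pi)(k))$ specified by the construction is a disjunction (over $\bA : A\to\moves$ respecting $\xi$) of conjunctions (over $\bA':\overline A\to\moves$ respecting $\xi$) of single successor atoms. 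Because the children of $\tau$ must satisfy this formula, at every $\tau$ we may fix a disjunctive witness $\bA_\tau$ with the property that for each valid $\bA'$ there is a child of $\tau$ whose label is forced to be $(\delta(q_\tau,\mathit{zip}(\Pi)(k)[\pi\mapsto s_\tau]),\kappa(s_\tau,\bA_\tau\oplus\bA'))$.

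Next I would build a selector $\nu : S^+ \rightharpoonup T$ by induction on depth. Set $\nu(\dot s) := \epsilon$; given $\nu(\sigma) = \tau$ with $s_\tau = \sigma(|\sigma|-1)$ and disjunctive witness $\bA_\tau$, for every state $s'$ of the form $\kappa(s_\tau,\bA_\tau\oplus\bA')$ arising from some valid $\bA'$, arbitrarily fix one such $\bA'$ and set $\nu(\sigma\cdot s')$ to the corresponding child of $\tau$. Define $\bF(\agent i)(\sigma) := \bA_{\nu(\sigma)}(\agent i)$ for $\agent i \in A$ on prefixes in the domain of $\nu$, and arbitrarily elsewhere. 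Since each $\bA_\tau$ respects $\xi$, we have $\bF \in \mathit{shr}_\calG(A,\xi)$.

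The key verification is that $\bF$ succeeds against every $\bF' \in \mathit{shr}_\calG(\overline A,\xi)$. Fix such $\bF'$ and let $p := \play_\calG(\dot s,\bF\oplus\bF')$. A direct induction on $k$ shows that $\nu(p[0,k])$ is defined, that $s_{\nu(p[0,k])} = p(k)$, and that $\nu(p[0,k+1])$ is a child of $\nu(p[0,k])$ in $(T,\ell)$; note that $\nu$ may have selected this child via some $\bA''$ different from $\bF'(p[0,k])$, but only when both yield the same CGS successor, which keeps the chain consistent. Reading the $q$-components along the resulting infinite branch $\nu(p[0,0]),\nu(p[0,1]),\ldots$ yields exactly the unique run of $\calA_{\varphi'}^{\mathit{det}}$ on $\mathit{zip}(\Pi[\pi\mapsto p])$, since the automaton transitions used in $\delta'$ are $\delta(q,l[\pi\mapsto s])$ and $\mathit{zip}(\Pi)(k)[\pi\mapsto p(k)] = \mathit{zip}(\Pi[\pi\mapsto p])(k)$. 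Acceptance of $(T,\ell)$ together with $c'(q,s) = c(q)$ forces this branch to satisfy the parity condition of $\calA_{\varphi'}^{\mathit{det}}$, so $\mathit{zip}(\Pi[\pi\mapsto p]) \in \calL(\calA_{\varphi'}^{\mathit{det}}) = \calL(\calA_{\varphi'})$. Applying the $(\calG,\dot s)$-equivalence of $\calA_{\varphi'}$ with $\varphi'$ gives $\dot s,\Pi[\pi\mapsto p] \models_\calG \varphi'$, and since $\bF'$ was arbitrary, $\dot s,\Pi \models_\calG \varphi$.

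The main obstacle I expect is the coherent definition of $\bF$: a priori, different branches of the run tree could reach the same CGS prefix via different opponent actions and carry different disjunctive witnesses, so a naive per-node choice would not yield a prefix-indexed strategy. The depth-wise selector $\nu$ resolves this by canonically committing to one tree node per reachable CGS prefix, while the universal branching in $\delta'$ guarantees that every move $\bF'$ could make is realised by some child in $(T,\ell)$. The symmetric case $\varphi = \llbracket A\rrbracket_\xi\,\pi\ldot\varphi'$ follows by swapping conjunctive and disjunctive branching and extracting a counter-strategy for $\overline A$ against each $\bF$ for $A$ in the same way.
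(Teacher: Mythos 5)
Your proposal is correct and follows essentially the same route as the paper's proof: extract $\bF$ from the disjunctive witnesses of an accepting run tree, trace the play $\play_\calG(\dot s,\bF\oplus\bF')$ along a branch of the tree, and transfer acceptance to the unique run of $\calA^{\mathit{det}}_{\varphi'}$ on $\mathit{zip}(\Pi[\pi\mapsto p])$ via $(\calG,\dot s)$-equivalence. Your selector $\nu$ even makes the prefix-to-node bookkeeping more explicit than the paper does (the paper just picks some node whose state labels match the prefix); the only nitpick is that the ``arbitrary'' off-domain values of $\bF$ should still respect $\xi$, e.g.\ by fixing the same default action for all agents.
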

\begin{proof}
	Let $(T, \ell)$ be an accepting run of $\calA_\varphi$ on $\mathit{zip}(\Pi)$.
	We use the disjunctive choices made in $(T, \ell)$ to construct a strategy vector $\bF \in \mathit{shr}_\calG(A, \xi)$ that serves as a witness for the existential quantifier in the semantics of $\dot{s}, \Pi \models_{\calG} \varphi$.
	For each finite play $u \in S^+$,  we define $\bF(\agent{i})(u)$ for all $\agent{i} \in A$ as follows.
	We check if there exists a node $\tau$ in $(T, \ell)$ such that the nodes along $\tau$ are labeled by $u$, i.e., 
	\begin{align*}
		&\ell(\epsilon), \ell(\tau[0, 0]), \ell(\tau[0, 1]), \ldots, \ell(\tau[0, |\tau|-1]) = \\
		&\quad\quad (\_, u(0)),(\_, u(1)), \ldots, (\_, u(|u|-1)),
	\end{align*}
	where we write ``$\_$'' as we do not care about the automaton states. 
	If no such node exists, we define $\bF(\agent{i})(u)$ arbitrarily for all $\agent{i} \in A$ (any play that is compatible with the strategy never reaches this situation). 
	Otherwise, let $\ell(\tau) = \ell(\tau[0, |\tau|-1])= (q, s)$ where $q \in Q$. 
	Note that $s = u(|u|-1)$.
	By construction of $\calA_\varphi$, we have that the children of $\tau$ in $(T, \ell)$ satisfy $\delta'\big((q, s), \mathit{zip}(\Pi)(|\tau|)\big)$, i.e.,
	\begin{align*}
		\bigvee_{\substack{\bA : A \to \moves \\\forall \agent{i}, \agent{j} \in A\ldot (\agent{i}, \agent{j}) \in \xi  \\\Rightarrow\bA(\agent{i}) = \bA(\agent{j})}} \;\; &\bigwedge_{\substack{\bA' : \overline{A} \to \moves \\\forall \agent{i}, \agent{j} \in \overline{A}\ldot (\agent{i}, \agent{j}) \in \xi  \\\Rightarrow\bA'(\agent{i}) = \bA'(\agent{j})}} \\
		&\big(\delta(q, \mathit{zip}(\Pi)(|\tau| )[\pi \mapsto s]), \kappa(s, \bA \oplus \bA')\big).
	\end{align*}
	There must thus exist (at least one) $\bA : A \to \moves$ that satisfies $\xi$ such that for every $\bA' : \overline{A} \to \moves$ (also subject to $\xi$), there is a child of $\tau$ labeled with 
	\begin{align*}
		\big(\delta(q, \mathit{zip}(\Pi)(|\tau|)[\pi \mapsto s]), \kappa(s, \bA \oplus \bA')\big).
	\end{align*}
	Note that any such $\bA$ assigns agents that should share a strategy the same action.
	We pick any $\bA : A \to \moves$ that satisfies the disjunction and define
	\begin{align*}
		\bF(\agent{i})(u) := \bA(\agent{i})
	\end{align*}
	for each $\agent{i} \in A$.
	
	We claim that the strategy vector $\bF : A \to \strats{\calG}$ we have just constructed is winning for $A$. 
	By construction, it is easy to see that we assign the same strategy for agents that are required to share a strategy, i.e., $\bF \in \mathit{shr}_\calG(A, \xi)$.
	So take any vector $\bF' \in \mathit{shr}_\calG(\overline{A}, \xi)$ and let  $p :=  \play_\calG(s, \bF \oplus \bF')$. 
	We claim that $\dot{s}, \Pi[\pi \mapsto p] \models_\calG \varphi'$. 
	Note that by the \HyperATLSS{} semantics, this would imply that $\dot{s}, \Pi \models_\calG \varphi$ as required. 
	
	In the construction of $\bF$, we -- for each node $\tau$ -- always picked an action vector that corresponds to a disjunction that is satisfied $\tau$'s children in $(T, \ell)$.
	So, for any possible actions for $\overline{A}$ chosen by $\bF'$, the successor state is again a node in $(T, \ell)$ (by construction of $\delta'$). 
	There thus exists a path in $(T, \ell)$ here the state component equals $p$, i.e., a path that is labeled with
	\begin{align*}
		(q_0, p(0)) (q_1, p(1)) (q_2, p(2)) \cdots
	\end{align*}
	for some sequence of automaton states $q_0 q_1q_2 \cdots$.
	By definition of $\delta'$, the sequence of automaton state $q_0q_1q_2 \cdots$ (where $q_0$ is the initial state of $\calA^\mathit{det}_{\varphi'}$) is the unique run of $\calA^\mathit{det}_{\varphi'}$ on $\mathit{zip}(\Pi[\pi \mapsto p])$.
	As, in $(T, \ell)$, the above infinite tree-path is accepting, the sequence of automaton states is accepting in $\calA^\mathit{det}_{\varphi'}$ (in $\calA_\varphi$ we use the same state color as in $\calA^{\mathit{det}}_{{\varphi'}}$).
	We therefore get that $\mathit{zip}(\Pi[\pi \mapsto p]) \in \calL(\calA^\mathit{det}_{\varphi'}) = \calL(\calA_{\varphi'})$.
	By the assumption that $\calA_{\varphi'}$ is $(\calG, \dot{s})$-equivalent to $\varphi'$, we thus get that $\dot{s}, \Pi[\pi \mapsto p] \models_\calG \varphi'$.
	As this holds for all $\bF' \in \mathit{shr}_\calG(\overline{A}, \xi)$, we get $\bF$ is a witness for the existentially quantified strategies for $A$ and so  $\dot{s}, \Pi \models_\calG \varphi$ as required.
\end{proof}

\begin{lemma}\label{lem:dir2}
	For any $\Pi : V \to S^\omega$, if $\dot{s}, \Pi \models_{\calG} \varphi$ then $\mathit{zip}(\Pi) \in \calL(\calA_\varphi)$.
\end{lemma}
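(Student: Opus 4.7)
The plan is to reverse the argument used in \Cref{lem:dir1}: rather than reading off a winning strategy from an accepting run tree, we use the witness strategy for $\llangle A \rrangle_\xi$ provided by the \HyperATLSS{} semantics to explicitly construct an accepting run tree of $\calA_\varphi$ on $\mathit{zip}(\Pi)$.

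Assume $\dot{s}, \Pi \models_\calG \varphi$, so there exists $\bF \in \mathit{shr}_\calG(A, \xi)$ such that for every $\bF' \in \mathit{shr}_\calG(\overline{A}, \xi)$, $\dot{s}, \Pi[\pi \mapsto \play_\calG(\dot{s}, \bF \oplus \bF')] \models_\calG \varphi'$. I will build a $(Q \times S)$-labeled tree $(T, \ell)$ by induction on depth. The root is labeled $(q_0, \dot{s})$. For any node $\tau$ at depth $k$ with label $(q, s)$, let $s_0, s_1, \ldots, s_k = s$ be the state-components along the unique ancestor chain of $\tau$; set $\bA_k : A \to \moves$ by $\bA_k(\agent{i}) := \bF(\agent{i})(s_0 \cdots s_k)$. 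Since $\bF \in \mathit{shr}_\calG(A, \xi)$, this choice respects $\xi$. For each $\bA'_k : \overline{A} \to \moves$ that respects $\xi$, add one child to $\tau$ labeled $\bigl(\delta(q, \mathit{zip}(\Pi)(k)[\pi \mapsto s]), \kappa(s, \bA_k \oplus \bA'_k)\bigr)$. By construction, the children of $\tau$ satisfy the disjunct of $\delta'\bigl((q, s), \mathit{zip}(\Pi)(k)\bigr)$ indexed by $\bA_k$, hence the boolean formula as a whole; so $(T, \ell)$ is a genuine run tree on $\mathit{zip}(\Pi)$.

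Next I will argue that every infinite branch of $(T, \ell)$ is accepting. Fix such a branch with labels $(q_0, s_0), (q_1, s_1), (q_2, s_2), \ldots$ and let $\bA'_k : \overline{A} \to \moves$ be the action vector indexing the transition from step $k$ to $k{+}1$ along the branch (each $\bA'_k$ respects $\xi$ by construction). Define a strategy vector $\bF' : \overline{A} \to \strats{\calG}$ by $\bF'(\agent{j})(u) := \bA'_k(\agent{j})$ whenever $u = s_0 \cdots s_k$ is a prefix of the branch's state-projection, and extend $\bF'$ off the branch in any way that preserves the sharing constraint on $\overline{A}$ (e.g.\ by assigning a fixed action to each $\xi$-equivalence class). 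Then $\bF' \in \mathit{shr}_\calG(\overline{A}, \xi)$, and a straightforward induction using $s_{k+1} = \kappa(s_k, \bA_k \oplus \bA'_k)$ and $\bA_k(\agent{i}) = \bF(\agent{i})(s_0 \cdots s_k)$ gives $s_0 s_1 s_2 \cdots = \play_\calG(\dot{s}, \bF \oplus \bF')$.

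The assumption on $\bF$ now yields $\dot{s}, \Pi[\pi \mapsto s_0 s_1 s_2 \cdots] \models_\calG \varphi'$; by $(\calG,\dot{s})$-equivalence of $\calA_{\varphi'}$ and the fact that $\calL(\calA_{\varphi'}) = \calL(\calA^\mathit{det}_{\varphi'})$, the word $\mathit{zip}(\Pi[\pi \mapsto s_0 s_1 s_2 \cdots])$ is accepted by $\calA^\mathit{det}_{\varphi'}$. Since $\calA^\mathit{det}_{\varphi'}$ is deterministic, its unique run on this word is exactly $q_0 q_1 q_2 \cdots$ (by definition of $\delta'$ this run appears in the first component of the branch), so the minimal color visited infinitely often is even. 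Because $c'(q_k, s_k) = c(q_k)$, the branch is accepting in $\calA_\varphi$. As this holds for every infinite branch, $(T, \ell)$ witnesses $\mathit{zip}(\Pi) \in \calL(\calA_\varphi)$.

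The only delicate point is the extension of the branch-local sequence $(\bA'_k)_k$ to a bona fide strategy vector $\bF' \in \mathit{shr}_\calG(\overline{A}, \xi)$: one must check that the branch-local actions are themselves $\xi$-consistent at every step (they are, by construction of $\delta'$) and that the off-branch extension can be chosen to respect $\xi$ (trivial, by assigning a common action to each equivalence class of the relation induced by $\xi$ on $\overline{A}$). Everything else is bookkeeping matching the construction of $\calA_\varphi$ to the semantic clause for $\llangle A \rrangle_\xi$.
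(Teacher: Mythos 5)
Your proposal is correct and follows essentially the same route as the paper's proof: build the run tree level by level from the witness strategy $\bF$ (choosing the disjunct given by $\bF$'s actions and adding one child per $\xi$-respecting $\bA'$), then, for each infinite branch, recover a counter-strategy $\bF' \in \mathit{shr}_\calG(\overline{A},\xi)$ realizing that branch as $\play_\calG(\dot{s}, \bF \oplus \bF')$ and invoke the $(\calG,\dot{s})$-equivalence of $\calA_{\varphi'}$ together with determinism of $\calA^\mathit{det}_{\varphi'}$ to conclude acceptance. Your explicit treatment of the off-branch, $\xi$-consistent extension of $\bF'$ is a detail the paper leaves implicit, but it does not change the argument.
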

\begin{proof}
	We assume $\dot{s}, \Pi \models_{\calG} \varphi$ so, by the \HyperATLSS{} semantics, there exists a concrete witness strategy vector $\bF \in \mathit{shr}_\calG(A, \xi)$.
	The concrete vector $\bF$ satisfies that for all $\bF' \in \mathit{shr}_\calG(\overline{A}, \xi)$ we have that $\dot{s}, \Pi[\pi \mapsto \play_\calG(s, \bF \oplus \bF')] \models_\calG \varphi'$. 

	We use $\bF$ to construct an accepting run $(T, \ell)$ of $\calA_\varphi$ on $\mathit{zip}(\Pi)$. 
	We construct this infinite tree incrementally by adding children to existing nodes.
	Initially, we start with the root node $\epsilon$ and define $\ell(\epsilon) := (q_0, \dot{s})$ (where $q_0$ is the initial state of $\calA_{{\varphi}}^\mathit{det}$).
	(Note that $(q_0, \dot{s})$ is the initial state of $\calA_\varphi$).
	Now let $\tau \in T$ be any node in the tree constructed so far, and let
	\begin{align*}
		&\ell(\epsilon),\ell(\tau[0,0]),\ell(\tau[0, 1]), \ldots, \ell(\tau[0, |\tau|-1]) = \\
		&\quad\quad\quad (q_0, \dot{s}),(q_1, s_1), \ldots, (q_{|\tau|}, s_{|\tau|})
	\end{align*}
	be the label of the nodes along path $\tau$ (note that there are $|\tau| + 1$ nodes along $\tau$).
	We define a partial move vector $\bA: A \to \moves$ via $\bA (\agent{i}) := \bF(\agent{i})(\dot{s},s_1, \ldots, s_{|\tau|})$ for each $\agent{i} \in A$.
	That is, we let each agent  $\agent{i}$ in $A$ play the action that strategy $\bF(i)$ would fix on the finite path of states reached along $\tau$.
	Note that as $\bF \in \mathit{shr}_\calG(A, \xi)$, $\bA$ satisfies the sharing constraints, i.e., $\forall \agent{i}, \agent{j} \in A\ldot (\agent{i}, \agent{j}) \in \xi  \Rightarrow\bA(\agent{i}) = \bA(\agent{j})$.
	
	After we have fixed $\bA$, we consider all possible move vector $\bA' : \overline{A} \to \moves$ that satisfy the sharing constraint (i.e., $\forall \agent{i}, \agent{j} \in \overline{A}\ldot (\agent{i}, \agent{j}) \in \xi  \Rightarrow\bA'(\agent{i}) = \bA'(\agent{j}$).
	For each such $\bA'$, we add a new child of $\tau$ labeled with 
	\begin{align*}
		\Big(\delta\big(q_{|\tau|}, \mathit{zip}(\Pi)(|\tau|)[\pi \mapsto s_{|\tau|}]\big), \kappa\big(s_{|\tau|}, \bA \oplus \bA'\big)\Big)
	\end{align*}
	By construction (as we consider all possible $\bA'$), we get that the children of $\tau$ satisfy $\delta'\big((q_{|\tau|}, s_{|\tau|}), \mathit{zip}(\Pi)(|\tau|)\big)$.
	The constructed tree $(T, \ell)$ is thus a run tree of $\calA_{{\varphi}}$ on $\mathit{zip}(\Pi)$.
	
	We now claim that $(T, \ell)$ is accepting. Consider any infinite path in $(T, \ell)$ labeled by $(q_0, \dot{s}) (q_1, s_1) (q_2, s_2) \cdots$. 
	By construction of $(T, \ell)$, it is easy to see that there exists some $\bF' \in \mathit{shr}_\calG(\overline{A}, \xi)$ such that $\play_\calG(s, \bF \oplus \bF') = s_0s_1s_2 \cdots$ (we can let $\bF'$ always pick the action vector that has led to the node being added to the tree during the construction).
	
	By the assumption on $\bF$, we thus get that $\dot{s}, \Pi[\pi \mapsto s_0s_1s_2 \cdots] \models_\calG \varphi'$.
	By the assumption that $\calA_{\varphi'}$ is $(\calG, \dot{s})$-equivalent to $\varphi'$, we thus get that $\mathit{zip}(\Pi[\pi \mapsto s_0s_1s_2 \cdots]) \in \calL(\calA_{{\varphi'}}) = \calL(\calA_{{\varphi'}}^\mathit{det})$.  
	By construction of $\calA_\varphi$, the automaton sequence $q_0 q_1 q_2\cdots$ is the unique run of $\calA_{\varphi'}^\mathit{det}$ on $\mathit{zip}(\Pi[\pi \mapsto s_0s_1s_2 \cdots])$ and therefore accepting (i.e., the minimal color that occurs infinitely often is even).
	As this holds for all paths in $(T, \ell)$, we have constructed an accepting run tree of $\calA_\varphi$ on $\mathit{zip}(\Pi)$, so $\mathit{zip}(\Pi) \in \calL(\calA_{{\varphi}})$ as required.
\end{proof}

\Cref{lem:dir1} and \Cref{lem:dir2} conclude the proof of \Cref{theo:construction}.

\subsection{Proof of \Cref{prop:correctProp}}\label{sec:sub:mc}

\correctProp*
\begin{proof}
	It is easy to see that the preprocessing done in lines \ref{line:extract-nested-formula}-\ref{line:extend-label} modifies $\calG$ and $\psi$ such that the satisfaction is preserved:
	By a simple inductive argument, the set $S_{\varphi'}$ computed in \cref{line:rec} thus contains all states $s$ such that $s, \emptyset \models_\calG {\varphi'}$ and extending the label of those states with a fresh proposition $p_{\varphi'}$ does not change this. 
	As $p_{\varphi'}$ now holds in exactly those states where $s, \emptyset \models_\calG {\varphi'}$, we can -- in $\psi$ -- soundly replace $\varphi'_\pi$ with $(p_{\varphi'})_\pi$.
	
	The more interesting case is to argue that the loop is correct. 
	We maintain a simple loop invariant (LI): 
	
	\begin{center}
		\emph{After \cref{line:product} is executed, $\calA$ is $(\calG, \dot{s})$-equivalent to $\lQ A_j \rQ_{\xi_j} \pi_j \ldots \lQ A_n \rQ_{\xi_n} \pi_n\ldot \psi$.}
	\end{center}

	\noindent
	It is easy to see that this loop invariant holds initially:
	Initially, the APA $\calA$ we construct from path formula $\psi$ (in \cref{line:ltlToApa}) is $(\calG, \dot{s})$-equivalent to the path formula $\psi$ (in fact, after line \ref{line:ltlToApa}, $\calA$ is $(\calG, s)$-equivalent to $\psi$ for every $s \in S$).
	In the first iteration of the loop (where $j=n$) we now apply \lstinline[style=default, language=custom-lang]|product| to quantifier $\lQ A_n \rQ_{\xi_n} \pi_n$, so by \Cref{theo:construction} (describing the construction in \lstinline[style=default, language=custom-lang]|product|) the APA $\calA$ is -- after \cref{line:product} -- $(\calG, \dot{s})$-equivalent to $\lQ A_n \rQ_{\xi_n} \pi_n\ldot \psi$ as required by the LI.
	
	For the inductive case, we can assume that  -- at the beginning of the loop, before \cref{line:product} -- $\calA$ was $(\calG, \dot{s})$-equivalent to $\lQ A_{j+1} \rQ_{\xi_{j+1}} \pi_{j+1} \ldots \lQ A_n \rQ_{\xi_n} \pi_n\ldot \psi$ (from the previous iteration). 
	As before, we apply \lstinline[style=default, language=custom-lang]|product| to $\lQ A_j \rQ_{\xi_j} \pi_j$, so by \Cref{theo:construction} -- after \cref{line:product} -- $\calA$ is $(\calG, \dot{s})$-equivalent to $\lQ A_j \rQ_{\xi_j} \pi_j \ldots \lQ A_n \rQ_{\xi_n} \pi_n\ldot \psi$ as required by the LI.
	
	After the loop, $\calA$ is thus $(\calG, \dot{s})$-equivalent to the entire formula $\varphi$. 
	By definition of $(\calG, \dot{s})$-equivalence, this means that $\dot{s}, \emptyset \models_\calG \varphi$ iff $\mathit{zip}(\emptyset) \in \calL(\calA)$.
	In \cref{line:emptinessCheck}, we thus add $\dot{s}$ to $\mathit{Sol}$ iff $\dot{s}, \emptyset \models_\calG \varphi$, so \lstinline[style=default, language=custom-lang]|modelCheck($\calG$,$\varphi$)| returns $\mathit{Sol} = \{s \in S \mid s, \emptyset \models_\calG \varphi \}$ as required.
\end{proof}

\section{Model-Checking Complexity}

In this section, we formally analyze the complexity of \Cref{alg:main-alg}.
Our main complexity measure is the \emph{nesting-rank} of a formula.

\begin{definition}[Rank]
	We assign each \HyperATLSS{} path and state formula a nesting-rank.
	For path formulas we define
	\begin{align*}
		\mathit{rank}(a_\pi) &:= 0 \\
		\mathit{rank}(\psi_1 \land \psi_2) &:= \max\big( \mathit{rank}(\psi_1), \mathit{rank}(\psi_2) \big)\\
		\mathit{rank}(\neg \psi) &:=  \mathit{rank}(\psi)\\
		\mathit{rank}(\ltlN \psi) &:=  \mathit{rank}(\psi)\\
		\mathit{rank}(\psi_1 \ltlU \psi_2) &:= \max\big( \mathit{rank}(\psi_1), \mathit{rank}(\psi_2) \big)\\
		\mathit{rank}(\varphi_\pi) &:=  \mathit{rank}(\varphi)
	\end{align*}
	and for state formulas we define
	\begin{align*}
		\mathit{rank}(\psi) &:= \mathit{rank}(\psi) \\
		\mathit{rank}(\lQ A \rQ_\xi \, \pi\ldot \varphi) &:= \mathit{rank}(\varphi) + 1.
	\end{align*}
\end{definition}

Intuitively, the rank gives the maximal number of quantifiers in any nested state formula.
When executing \lstinline[style=default, language=custom-lang]|modelCheck($\calG$,$\varphi$)|, we recursively call \lstinline[style=default, language=custom-lang]|modelCheck| on state formulas of the form $\lQ A_1 \rQ_{\xi_1} \pi_1 \ldots \lQ A_n \rQ_{\xi_n} \pi_n\ldot \psi$. 
By the definition $\mathit{rank}$, each such call satisfies $n \leq \mathit{rank}(\varphi)$; the rank thus gives an upper bound on \emph{consecutive} applications of  \lstinline[style=default, language=custom-lang]|product|. 
In all recursive calls, the size of $\calA$ is thus at most $\big(2 \cdot \mathit{rank}(\varphi)\big)$-exponential, giving us an upper bound on the model-checking complexity.

\begin{proposition}
	Model-checking for a \HyperATLSS{} formula with nesting-rank $m$ is in $2m$-\EXPTIME{}.
\end{proposition}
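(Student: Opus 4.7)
The plan is to prove the bound by induction on the nesting-rank $m$, showing that for every closed \HyperATLSS{} state formula $\varphi$ with $\mathit{rank}(\varphi) \leq m$, the call \texttt{modelCheck}$(\calG, \varphi)$ runs in $2m$-\EXPTIME{} in the combined size of $\calG$ and $\varphi$. The base case $m = 0$ is essentially trivial, since a rank-$0$ formula contains no quantifiers and no nested state formulas, so \texttt{modelCheck} performs only one \texttt{LTLtoAPA} call followed by an empty outer loop, running in polynomial time.

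For the inductive step, fix $\varphi = \lQ A_1 \rQ_{\xi_1} \pi_1 \cdots \lQ A_n \rQ_{\xi_n} \pi_n \ldot \psi$ with $\mathit{rank}(\varphi) = m$. By definition of $\mathit{rank}$ we have $\mathit{rank}(\psi) = m - n$, and every nested state subformula $\varphi'$ occurring inside $\psi$ satisfies $\mathit{rank}(\varphi') \leq m - n$. The algorithm first processes these subformulas recursively; by the induction hypothesis each recursive call runs in $2(m-n)$-\EXPTIME{}, and there are only polynomially many of them (one per syntactic occurrence), so the total preprocessing cost is $2(m-n)$-\EXPTIME{}. After preprocessing, $\psi$ has become a pure path formula over an enriched labeling, and \texttt{LTLtoAPA} yields an APA $\calA$ whose size is linear in $|\psi|$.

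The second phase iterates over the polynomially many states $\dot{s} \in S$ and, for each one, performs $n$ successive applications of \texttt{product}. By \Cref{theo:construction} combined with \Cref{prop:toDPA}, each application blows the automaton up by at most two exponentials (the determinization inside \texttt{product} accounts for both), after which a product with $\calG$ adds only a polynomial factor. A short side-induction on the iteration index shows that the automaton obtained after all $n$ applications has size bounded by a tower of $2n$ exponentials over $|\calA| + |\calG|$. The final emptiness test is on an APA over the singleton alphabet $\emptyset \to S$, which reduces to parity-game solving and is therefore polynomial in the automaton size. The per-state cost is thus $2n$-\EXPTIME{}, and combining the two phases gives $2(m-n) + 2n = 2m$-\EXPTIME{} overall, closing the induction.

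The one place where the argument could easily be mishandled is the compositional accounting of the doubly-exponential blowup: one must verify that $n$ compositions of a doubly-exponential operator, applied to an initial APA of polynomial size, really yield an automaton bounded by a tower of exactly $2n$ exponentials, and that the recursive calls for nested state formulas do not cause the exponential blowup at deeper levels to leak into the outer computation. The latter is the crucial observation: a recursive call returns only a subset of $S$, which is absorbed into the labeling $L$ and represented by a fresh atomic proposition, so the intermediate automata built at deeper recursion levels are never carried up. With this accounting the elementary bound $2m$-\EXPTIME{} falls out directly.
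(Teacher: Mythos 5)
Your proof is correct and follows essentially the same route as the paper's own (brief) argument: the nesting-rank bounds the number of consecutive \lstinline[style=default, language=custom-lang]|product| applications in any call, each such application costs at most two exponentials by \Cref{theo:construction} and \Cref{prop:toDPA}, and recursive calls for nested state formulas return only a subset of $S$ absorbed into the labeling, so their blowup never compounds with the outer computation — yielding automata of at most $2m$-exponential size and hence a $2m$-\EXPTIME{} bound. Two harmless nitpicks: the final membership test reduces to a parity game that is only known to be solvable in quasi-polynomial (not polynomial) time, and the two phases combine by taking the maximum of $2(m-n)$ and $2n$ rather than their sum; neither affects the stated bound.
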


If we consider arbitrary formulas where we do not parameterize the complexity by their rank, MC is non-elementary.

\begin{proposition}
	The model-checking problem for \HyperATLSS{} is decidable in non-elementary time. 
\end{proposition}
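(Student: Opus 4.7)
The plan is to derive this statement as a direct corollary of the preceding proposition that bounds the complexity for formulas of fixed nesting-rank. Given a closed \HyperATLSS{} formula $\varphi$, the quantity $m_\varphi := \mathit{rank}(\varphi)$ is a well-defined natural number that can be computed by a trivial syntactic recursion over $\varphi$. So my decision procedure would be: on input $(\calG, \varphi)$, compute $m_\varphi$, and then run \lstinline[style=default, language=custom-lang]|modelCheck($\calG$,$\varphi$)| (\Cref{alg:main-alg}). Correctness is ensured by \Cref{prop:correctProp}, and the preceding proposition guarantees termination in $2 m_\varphi$-\EXPTIME{}; in particular the algorithm halts on every input, so model checking is decidable.

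It remains to justify the \emph{non-elementary} qualifier. The running-time bound from the previous proposition is a tower of exponentials of height $2 m_\varphi$ in $|\calG| + |\varphi|$, and $m_\varphi$ itself is unbounded over the set of all possible inputs (we can always write down a formula with one more layer of strategic quantification). Hence no single $k$-\EXPTIME{} bound holds uniformly for the whole class of inputs; this is exactly what it means for the upper bound to be non-elementary. The already-mentioned non-elementary \emph{hardness} inherited from \HyperATLS{} (cf.\ the Complexity paragraph after \Cref{prop:correctProp}) then makes this bound tight.

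The only real subtlety is terminological rather than mathematical: one has to notice that ``decidable in non-elementary time'' is just an upper-bound statement, and that the preceding per-rank proposition already exhibits a decision procedure whose runtime, viewed as a function of the input alone (not of the rank as a separate parameter), lies outside the elementary hierarchy. No additional automata-theoretic or game-theoretic work is needed; the whole proof is a one-line appeal to the rank-parameterized result combined with the observation that $m_\varphi$ is finite but unbounded.
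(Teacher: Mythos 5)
Your proposal is correct and matches the paper's own reasoning: the proposition is obtained exactly as you describe, by combining the rank-parameterized $2m$-\EXPTIME{} bound (via \Cref{alg:main-alg}, \Cref{theo:construction}, and \Cref{prop:correctProp}) with the observation that the nesting-rank is finite for each formula but unbounded across inputs, so the resulting tower-of-exponentials runtime is non-elementary yet yields decidability. No further argument is needed, and your terminological remark about ``non-elementary time'' being an upper-bound statement is exactly the reading the paper intends.
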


\paragraph{Lower Bounds.}

It is easy to see that \HyperATLSS{} subsumes \HyperATLS{}.
From the known \HyperATLS{} lower bounds \cite{BeutnerF23} we thus get:

\begin{proposition}[\citet{BeutnerF23}]\label{prop:lb}
	The model-checking problem for \HyperATLSS{} is non-elementary-hard. 
\end{proposition}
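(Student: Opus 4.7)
The plan is to obtain non-elementary hardness \emph{for free} by exhibiting \HyperATLSS{} as a conservative syntactic and semantic extension of \HyperATLS{}, whose model-checking problem was already shown to be non-elementary-hard by \citet{BeutnerF23}. Concretely, I would first observe that every \HyperATLS{} formula is, up to notation, the special case of a \HyperATLSS{} formula in which every sharing constraint is empty. Inspecting the semantics of $\llangle A \rrangle_\xi \, \pi\ldot \varphi$ and $\llbracket A \rrbracket_\xi \, \pi\ldot \varphi$ from the excerpt, one sees that when $\xi = \emptyset$ the constraint $\mathit{shr}_\calG(A, \xi)$ degenerates to the full set of strategy vectors $A \to \strats{\calG}$, so the quantification collapses to the standard \HyperATLS{} quantification. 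Hence the identity mapping on formulas is a trivial polynomial-time reduction from \HyperATLS{} model checking to \HyperATLSS{} model checking, on the same class of CGSs.

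With this embedding in hand, the proof is essentially a one-line reduction: given any instance $(\calG, \varphi)$ of \HyperATLS{} model checking, let $\varphi'$ be $\varphi$ viewed as a \HyperATLSS{} formula with $\xi = \emptyset$ at every quantifier. Then $\calG \models_{\mathrm{HyperATL}^*} \varphi$ if and only if $\calG \models \varphi'$. Since the non-elementary lower bound for \HyperATLS{} already holds on finite CGSs with formulas of unbounded quantifier alternation \cite{BeutnerF23}, the same family of instances witnesses non-elementary hardness of \HyperATLSS{}.

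The only thing that needs a brief sanity check is that the semantic collapse $\xi = \emptyset \Rightarrow \mathit{shr}_\calG(A, \emptyset) = \{ \bF : A \to \strats{\calG} \}$ really reproduces the \HyperATLS{} semantics verbatim, and that this collapse respects the syntactic well-formedness condition $\xi \subseteq (A \times A) \cup (\overline{A} \times \overline{A})$ (which is vacuous for $\xi = \emptyset$). Both are immediate from the definitions stated earlier in the paper, so no real work is required.

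I do not anticipate any genuine obstacle: the whole content of the proposition is the observation that strategy sharing is an \emph{additional} feature and can be switched off, together with the citation of the prior hardness result. If one wanted a self-contained argument, the harder task would be to reprove \HyperATLS{} non-elementary hardness itself, which proceeds by reducing from the satisfiability problem of first-order logic over finite words (or an equivalent tower-hard problem) by encoding alternating path quantifiers; but since we are permitted to invoke results stated earlier, this reduction is not needed here.
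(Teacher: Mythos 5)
Your proof is correct and matches the paper's argument exactly: the paper likewise observes that \HyperATLSS{} with empty sharing constraints coincides with \HyperATLS{} (since $\mathit{shr}_\calG(A,\emptyset)$ is all of $A \to \strats{\calG}$) and then transfers the non-elementary lower bound of \citet{BeutnerF23} via this trivial embedding. No gaps; your sanity checks on the semantic collapse and the well-formedness condition are the only details needed, and you handle them.
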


\fi

\end{document}